\documentclass{article}
\usepackage[utf8]{inputenc} 
\usepackage[T1]{fontenc}    
\usepackage{hyperref}       
\usepackage{url}            
\usepackage{booktabs}       
\usepackage{amsfonts}       
\usepackage{nicefrac}       
\usepackage{microtype}      
\usepackage[left=1in,top=1in,right=1in,bottom=1in,letterpaper]{geometry}

\usepackage{comment}
\usepackage{graphicx,subfigure}
\usepackage{amssymb,amsmath,color,amsthm,turnstile,txfonts,mathrsfs,bm}
\usepackage{xparse}

\usepackage{tikz}
\newcommand*\circled[1]{\tikz[baseline=(char.base)]{
            \node[shape=circle,draw,inner sep=.8pt] (char) {#1};}}

\theoremstyle{definition}

\newtheorem{definition}{Definition}
\newtheorem{theorem}{Theorem}

\newtheorem{lemma}[theorem]{Lemma}
\newtheorem{remark}{Remark}
\newtheorem{prop}{Proposition}

\newcommand{\RR}{\mathbb{R}}

\def\M{\mathcal{M}}

\def\N{\mathcal{N}}
\def\E{\textbf{E}}

\def\D{\textbf{D}}

\def\RR{\mathbb{R}}
\def\S{\mathcal{S}}

\def\T{\mathcal{T}}

\def\Z{\mathcal{Z}}

\def\real{\mathbb{R}}

\title{Chart Auto-Encoders for Manifold Structured Data}

\author{Stefan C. Schonscheck\thanks{Department of Mathematics, Rensselaer Polytechnic Institute, Troy, NY 12180, U.S.A. ({\tt schon@rpi.edu}). S. Schonsheck's reserach is supported in part by the Rensselaer-IBM AI Research Collaboration (http://airc.rpi.edu), part of the IBM AI Horizons Network (http://ibm.biz/AIHorizons).}, ~~
Jie Chen\thanks{
MIT-IBM Watson AI Lab, IBM Research. U.S.A. ({\tt chenjie@us.ibm.com})},~~
Rongjie Lai\thanks{Department of Mathematics, Rensselaer Polytechnic Institute, Troy, NY 12180,
         U.S.A. ({\tt lair@rpi.edu}). R.Lai's research is supported in part by an NSF CAREER Award DMS--1752934 and the Rensselaer-IBM AI Research Collaboration (http://airc.rpi.edu), part of the IBM AI Horizons Network (http://ibm.biz/AIHorizons). }
}
\date{}

\begin{document}

\maketitle

\begin{abstract}
Deep generative models have made tremendous advances in image and signal representation learning and generation. These models employ the full Euclidean space or a bounded subset as the latent space, whose flat geometry, however, is often too simplistic to meaningfully reflect the manifold structure of the data. In this work, we advocate the use of a multi-chart latent space for better data representation. Inspired by differential geometry, we propose a \textbf{Chart Auto-Encoder (CAE)} and prove a universal approximation theorem on its representation capability. We show that the training data size and the network size scale exponentially in approximation error with an exponent depending on the intrinsic dimension of the data manifold. CAE admits desirable manifold properties that auto-encoders with a flat latent space fail to obey, predominantly proximity of data. We conduct extensive experimentation with synthetic and real-life examples to demonstrate that CAE provides reconstruction with high fidelity, preserves proximity in the latent space, and generates new data remaining near the manifold. These experiments show that CAE is advantageous over existing auto-encoders and variants by preserving the topology of the data manifold as well as its geometry.
\end{abstract}

\section{Introduction}\label{sec:intro}

Auto-encoding~\cite{bourlard1988auto,hinton1994autoencoders,liou2014autoencoder} is a central tool in unsupervised representation learning. The latent space therein captures the essential information of a given data set, serving the purposes of dimension reduction, denoising, and generative modeling. Even for models that do not employ an encoder, such as generative adversarial networks~\cite{Goodfellow2014}, the generative component starts with a latent space. A common practice is to model the latent space as a low-dimensional Euclidean space $\real^m$ or a bounded subset of it (e.g., $[0,1]^m$), sometimes equipped with a prior probability distribution. Such spaces carry simple geometry and may not be adequate for representing complexly structured data.

A commonly held belief, known as the \emph{manifold hypothesis}~\cite{Belkin2003, fefferman2016testing}, states that real-life data often lies on, or at least near, some low-dimensional manifold embedded in a high-dimensional ambient space. Hence, a natural approach to representation learning is to introduce a low-dimensional latent space to which the data is mapped. It is desirable that such a mapping possesses basic properties such as invertibility and continuity, i.e., \emph{homeomorphism}. Challengingly, it is known that even for simple manifolds, there does not always exist a homeomorphic mapping to the Euclidean space whose dimension is the intrinsic dimension of the data. 

For example, consider a double torus shown in Figure~\ref{fig:Eight}. When one uses a plain auto-encoder to map uniform points on this manifold to $\RR^2$, the distribution of the points is distorted and the shape destroyed; whereas if one maps to $\RR^3$, some of the points depart from the mass and become outliers. Generalization suffers, too. In Figure \ref{fig:Eight} and in the supplementary material, we show the results of several variational auto-encoders with increasing complexity. They fail to generate data to cover the whole manifold; worse, the newly sampled data do not all stay on the manifold. Nearby points on the manifold may be far apart in the latent space because homeomorphism is violated.
\begin{figure}[ht]
  \centering
  \includegraphics[width=.6\linewidth]{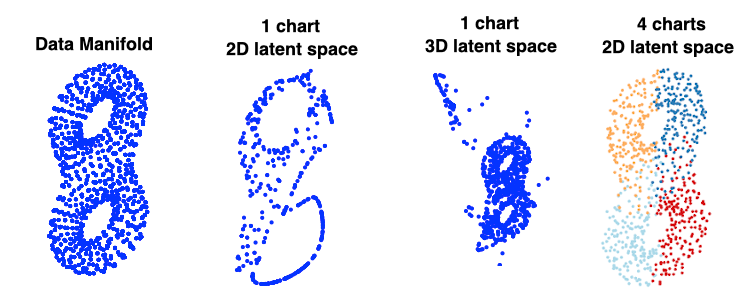}
  \caption{Left: Data on a double torus. Middle two: Data auto-encoded to a flat latent space. Right: Data auto-encoded to a 4-chart latent space.}
  \label{fig:Eight}
\end{figure}

To address this issue rigorously, we introduce a concept called \emph{$\epsilon$-faithful representation}, which quantitatively measures the topological and geometric approximation of auto-encoders to the data manifold. We show that a plain auto-encoder can not $\epsilon$-faithfully represent a manifold with non-contractible topology. To circumvent the drawbacks of existing auto-encoders, in this work, we follow the definition of manifolds in differential geometry and propose a \textbf{Chart Auto-Encoder (CAE)} to learn a low-dimensional representation of the data. Rather than using a single function mapping, the manifold is parameterized by a collection of \emph{overlapping charts}, each of which describes a local neighborhood collectively covering the entire manifold. The reparameterization of an overlapping region shared by different charts is described by the associated \emph{transition function}. Theoretically, we study a universal manifold approximation theorem using CAE with training data size and network size estimations.

As an illustration, we show to the right of Figure~\ref{fig:Eight} the same double torus aforementioned, now parameterized by using four color-coded charts. This example exhibits several characteristics and benefits of the proposed work: (i) the charts collectively cover the manifold and faithfully preserve the topology (two holes); (ii) the charts overlap (as evident from the coloring); (iii) new points sampled from the latent space remain on the manifold; (iv) nearby points on the manifold remain close in the latent space; and (v) because of the preservation of geometry, one may accurately estimate geometric proprieties (such the geodesics).

In this paper, we make the distinction between \emph{geometric} and \emph{topological} error in generative models: geometric errors occur when the model produces results far from the training data manifold, whereas topological errors occur when the model fails to capture portions of the manifold
or causing nearby points far apart in the latent space.

\subsection{Related Work}\label{sec:related}

Exploring the low-dimensional structure of manifolds has led to many dimension reduction techniques in the past two decades~\cite{tenenbaum2000global, roweis2000nonlinear, cox2001, Belkin2003, He2003, zhang2004principal, Kokiopoulou2007, maaten2008visualizing}. Many of these classical methods employ a flat Euclidean space for embedding and may lose information as aforementioned. Auto-encoders use a decoder to serve as the inverse of a dimension reduction. The latent space therein is still flat. One common approach to enhancing the capability of auto-encoders is to impose a prior distribution on the latent space (e.g., VAE~\cite{kingma2013auto}). The distributional assumption (e.g., Gaussian) introduces low-density regions that sometimes depart from the manifold. Then, paths in these regions either trace off the manifold or become invariant.

The work~\cite{falorsi2018explorations} introduced a non-Euclidean latent space to guarantee the existence of a homeomorphic representation, realized by a homeomorphic variational auto-encoder. Requiring the knowledge of the topological class of the data set and estimation of a Lie group action on the latent space limit this approach to be applied in practice. 
If the topology of the data is relatively simple (e.g., a sphere or torus), the computation is amenable; but for more complexly structured data sets, it is rather challenging. Similarly, several recent work~\cite{davidson2018hyperspherical, rey2019disentanglement, connor2019representing} studies auto-encoders with (hyper-)spherical latent spaces. These methods allow for the detection of cyclical features (prior information is needed) but offer little insight into the topology of a general manifold. Our method is completely unsupervised.

Under the manifold hypothesis, most provable neural network approximation results~\cite{csaji2001approximation, gyorfi2006distribution, shaham2018provable, chen2019efficient} for low dimensional manifolds focus on approximating a function $f$ supported on or near some smooth $d$-dimensional manifold isometrically embedded in $\RR^m$ with $d \ll m$. This setup models tasks such as recognition, classification or segmentation of data. However, when analyzing generative models such as auto-encoders and GANs, it is more interesting to study how well a model can actually represent the manifold $\M$ given some training data $\{x\}_{i=1}^n$ sampled from $\M$. Recently, \cite{khrulkov2019universality} showed an existence proof of universal approximation for data distributed on compact manifolds without providing any bounds for the size of the networks or their approximation quality. To the best of our knowledge, our work is for the first time to theoretically analyze topology and geometry approximation behavior of neural networks to a data manifold with bounds on the training data size and network size. Additionally, our work also introduces an implementable neural network architecture and demonstrates its effectiveness in real data. 
\section{Background on Manifold and Chart Based Parameterization}\label{sec:Manifolds}
\vspace{-.04in}
\begin{figure}[ht]
  \centering
  \begin{minipage}{0.3\linewidth}
    \centering
    \includegraphics[width=1\linewidth]{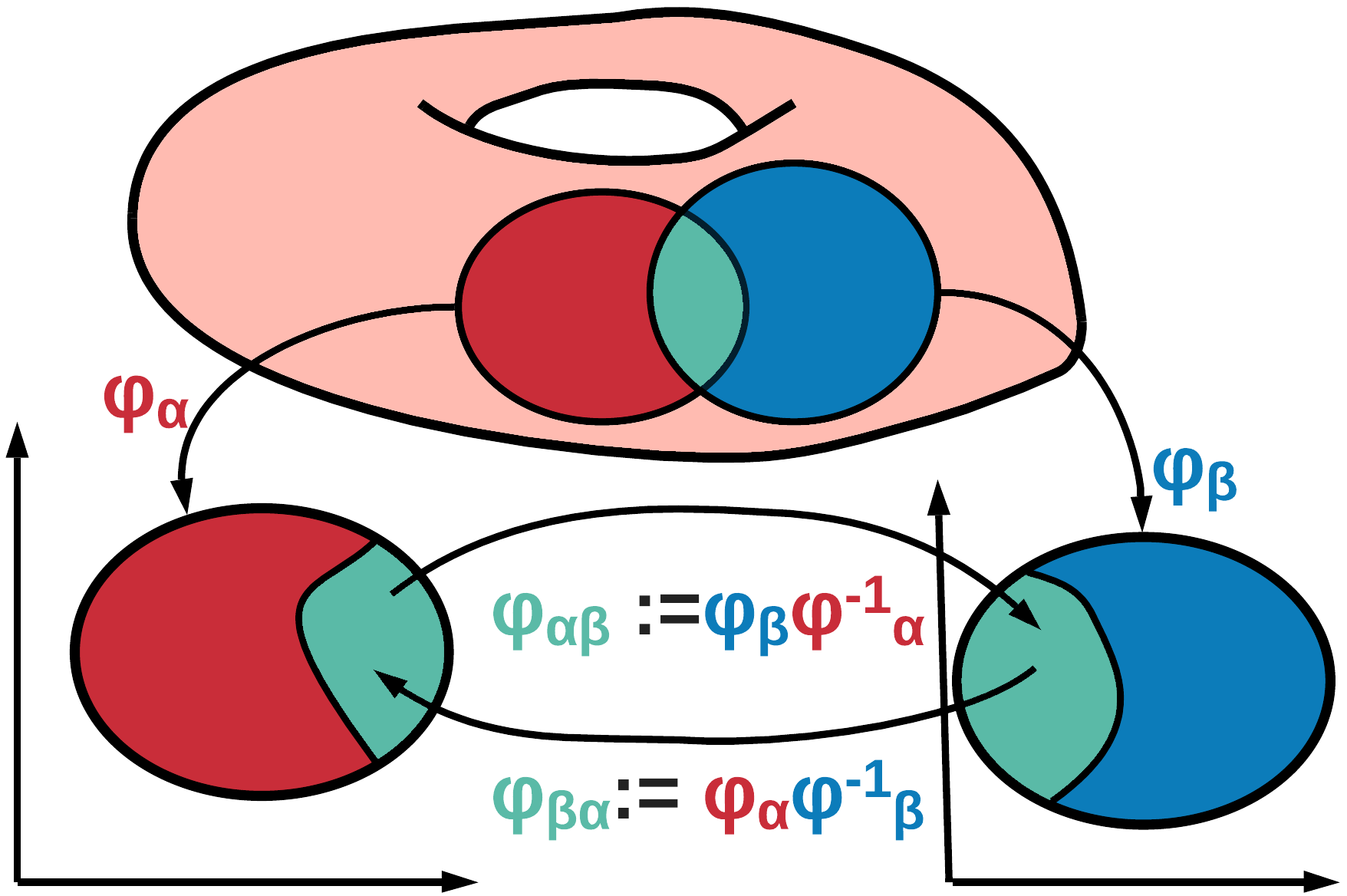}
  \end{minipage}\hfill
  \begin{minipage}{0.69\linewidth}
    \centering
    \includegraphics[width=1\linewidth]{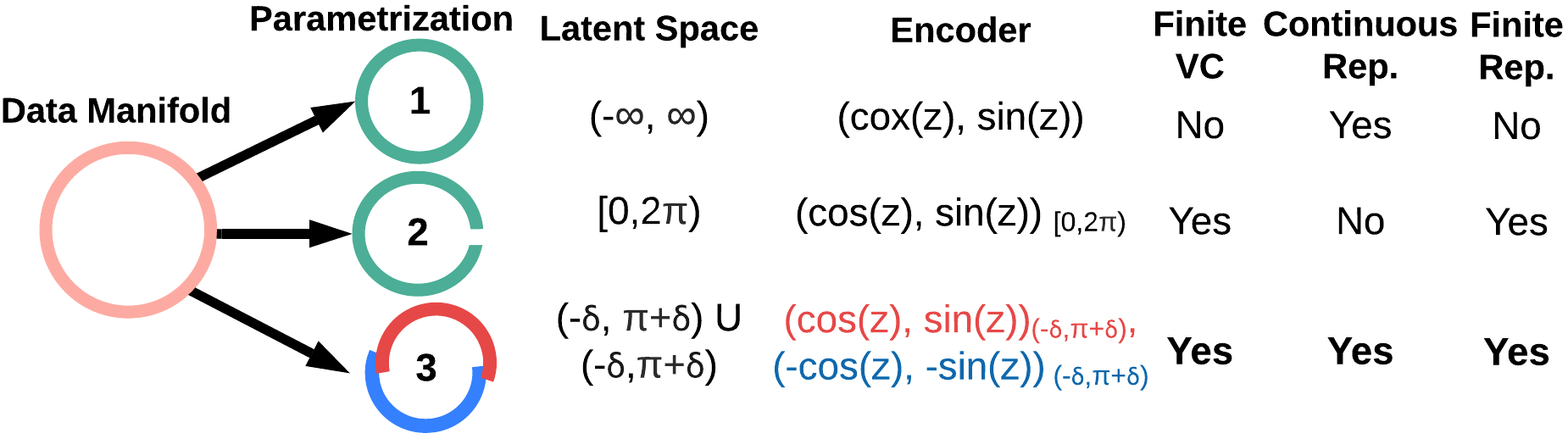}
  \end{minipage}
  \caption{Left: Illustration of a manifold. Right: Possible parameterizations of a circle. The manifold approach (bottom) preserves all desired properties.}
  \label{fig:Manifold}
\end{figure}
A manifold is a topological space locally homeomorphic to a Euclidean domain. More formally, a $d$-dimensional manifold is defined as a collection of pairs $\{(\M_\alpha,\phi_\alpha)\}_{\alpha}$, referred to as \emph{charts}, where $\{\M_\alpha\}_\alpha$ are open sets satisfying $\M= \bigcup_\alpha \M_\alpha$. Each $\M_\alpha$ is homeoporphic to an open set $\Z_\alpha\subset \real^d$ through the coordinate map $\phi_\alpha:\M_\alpha\rightarrow \Z_\alpha$. Different charts can be glued together through \emph{transition functions} $\phi_{\alpha\beta}:\phi_\alpha(\M_\alpha \cap \M_\beta) \rightarrow \phi_\beta(\M_\alpha \cap \M_\beta)$ satisfying cyclic conditions (see Figure~\ref{fig:Manifold} left). Smoothness of the transition functions controls the smoothness of the manifold. A well-known result from differential geometry states that any compact manifold can be covered by a finite number of charts which obey these conditions. 
See \cite{lee2013smooth} for a thorough review.

To illustrate the necessity of a multi-chart parameterization, we consider a simple example: finding a latent representation of data sampled from the 1-dimensional circle $S^1$ embedded in $\RR^2$ (Figure~\ref{fig:Manifold} right). A simple (non-chart) parameterization is $(\cos(z),\sin(z))$, with $z\in (-\infty,\infty)$. However, approximating this parameterization with a finite neural network is impossible, since $z$ is unbounded and hence any multi-layer perceptron will have an infinite Vapnik-Chervonenkis dimension~\cite{blumer1989learnability}. One obvious alternative is to limit $z \in [0,2\pi)$, but this parameterization introduces a discontinuity and breaks the topology (nearby points straddling across the discontinuity have distant $z$ values).
Following the definition of manifolds, we instead parameterize the circle using two charts, with $\delta$ denoting chart overlap:
\begin{equation}
  \begin{split}
    \phi_{\alpha}:  ( 0 -\delta ,\pi + \delta)  \rightarrow S^1 ,  & \quad z_\alpha \mapsto (\cos(z),\sin(z)), \\
    \phi_{\beta}:  (0 - \delta ,\pi + \delta) \rightarrow S^1,  & \quad z_\beta\mapsto  (-\cos(z),-\sin(z)), \\
    \phi_{\alpha\beta}:  (-\delta,\delta) \rightarrow (\pi-\delta, \pi+\delta),  & \quad z_\alpha \mapsto z_\alpha + \pi,\\
    \phi_{\alpha\beta}:  (\pi - \delta, \pi+ \delta) \rightarrow ( -\delta,  \delta),  & \quad z_\alpha \mapsto z_\alpha - \pi.
  \end{split}
\end{equation}
Although this function is cumbersome to write, it is more suitable for representation learning, since each encoding function can be represented with finite neural networks. Moreover, the topological and geometric information of the data is maintained.

For more complex manifolds, we use an auto-encoder to parameterize the coordinate map and its inverse. Specifically, given a manifold $\M$, typically embedded in a high dimensional ambient space $\RR^m$, the encoding network constructs a local parameterization $\phi_\alpha$ from the data manifold to the latent space $\Z_\alpha$; and the decoding network maps $\Z_\alpha$ back to the data manifold $\M$ through $\phi_\alpha^{-1}$.

\section{Universal Manifold Approximation}
\label{sec:theory}

In this section, we rigorously address the topology and geometry approximation behaviors in auto-encoders. We show that topology preservation in auto-encoders is a necessary condition to approximate data manifold $\epsilon$-closely. Moreover, we study a universal manifold approximation theorem based on multi-chart parameterization and provide estimations of training data size and network size. We defer detailed proofs of all statements to the supplemental material (Section~\ref{sec:proofs}).

Mathematically, we denote an auto-encoder of $\M\subset\real^m$ by a 3-tuple $(\Z; \E,\D)$. Here, $\Z = \E(\M)$ represents the latent space; $\E$, representing the encoder, continuously maps $\M$ to $\Z$; and $\D$, representing the decoder, continuously generates a data point $\D(z)\in\real^m$ from a latent variable $z\in\Z$. A plain auto-encoder refers to an auto-encoder whose latent space is a simply connected Euclidean domain. 

\paragraph{Faithful representation} To quantitatively measure an auto-encoder, we introduce the following concepts to characterize it.

\begin{definition}[Faithful Representation]
An auto-encoder $(\Z; \E,\D)$ is called a faithful representation of $\M$ if $x = \D\circ\E(x), \forall x\in\M$. An auto-encoder is called an $\epsilon$-faithful representation of $\M$ if $\displaystyle  \sup_{x\in\M} \|x - \D\circ\E(x)\| \leq \epsilon$. 
\end{definition}
To characterize manifolds and quality of auto-encoders, we reiterate the concept \emph{reach} of a manifold~\cite{federer1959curvature}. 
Given a $d$-dimensional compact data manifold $\M\subset \real^m$, let $\displaystyle \mathcal{G} = \Big \{y\in \real^m~|~\exists p\neq q\in\M \text{~satisfying~} \|y - p\| = \|y- q\| = \inf_{x\in\M}\|x - y\| \Big\} $. The reach of $\M$ is defined as $\displaystyle \tau (\M)= \inf_{x\in\M,y\in\mathcal{G}}\|x-y\|$.

\begin{theorem}
\label{thm:faithfulrep} Let $\M$ be a $d$-dimensional compact manifold. 
If an auto-encoder $(\Z; \E,\D)$ of $\M$ is an  $\epsilon$-faithful representation with $\epsilon < \tau(\M)$, then $\Z$ and $\D(\Z)$ must be homeomorphic to $\M$. Particularly, a $d$-dimensional compact manifold with non-contractible topology can not be $\epsilon$-faithfully represented by a plain auto-encoder with a  latent space $\Z$ being a $d$-dimensional simply connected domain in $\real^d$.
\end{theorem}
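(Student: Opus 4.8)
The plan is to build everything on the nearest-point projection supplied by the reach. Since $\tau(\M)>0$, on the open tube $U=\{y\in\real^m: \inf_{x\in\M}\|x-y\|<\tau(\M)\}$ every point has a \emph{unique} closest point on $\M$ (this is exactly what excluding the medial set $\mathcal{G}$ buys us), so the map $\pi:U\to\M$ sending $y$ to that closest point is well-defined and continuous, with $\pi|_\M=\mathrm{id}_\M$. Because $\epsilon<\tau(\M)$ and $\|\D\circ\E(x)-x\|\le\epsilon$ for $x\in\M$, every reconstructed point lies in $U$; in particular $\D(\Z)\subset U$ and $\pi$ is defined on all of $\D(\Z)$.

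I would first dispatch the exact faithful case $\D\circ\E=\mathrm{id}_\M$ to fix intuition: there $\E$ is a continuous bijection from the compact space $\M$ onto the Hausdorff space $\Z$, hence a homeomorphism, and $\D|_\Z=\E^{-1}$ gives $\Z\cong\M\cong\D(\Z)$. For the genuine $\epsilon$-case the key construction is the straight-line homotopy
\begin{equation}
  H:\M\times[0,1]\to\M,\qquad H(x,t)=\pi\bigl((1-t)\,x+t\,\D\circ\E(x)\bigr).
\end{equation}
For each fixed $x$ the segment from $x$ to $\D\circ\E(x)$ has length $\le\epsilon<\tau(\M)$ and stays within $\epsilon$ of $x\in\M$, so it lies in $U$ and $H$ is well-defined and continuous, with $H(\cdot,0)=\mathrm{id}_\M$ and $H(\cdot,1)=\pi\circ\D\circ\E$. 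Thus $\mathrm{id}_\M\simeq(\pi\circ\D)\circ\E$ factors up to homotopy through $\Z$, so $\M$ is homotopy-dominated by $\Z$ (and likewise by $\D(\Z)$). To upgrade domination to a homeomorphism I would then argue that $\E$ is injective; once that is in hand, the compact-to-Hausdorff bijection argument from the exact case applies verbatim to give $\Z\cong\M$, and $\pi|_{\D(\Z)}$ inverts $\D\circ\E$ to give $\D(\Z)\cong\M$.

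The hard part will be precisely this injectivity/embedding step: $\epsilon$-closeness of $\D\circ\E$ to the identity does \emph{not} by itself forbid $\E$ from pinching two points of $\M$ together, and such pinching would make $\Z$ a non-manifold quotient of $\M$. Ruling this out is where the reach condition must be used in full strength (not merely to define $\pi$, but to control how far apart identified points can be), and it is the genuine obstacle in the proof. For the final assertion, however, the conclusion is then immediate: by the first part $\Z\cong\M$, yet a simply connected domain in $\real^d$ is contractible, so $\M$ would inherit a contractible homotopy type, contradicting its non-contractibility. I would in fact note that this obstruction already lives at the homotopy level established above, since $\mathrm{id}_\M$ factoring up to homotopy through a contractible $\Z$ forces $\mathrm{id}_\M$ to be null-homotopic, i.e.\ $\M$ to be contractible.
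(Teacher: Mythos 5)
Your proposal stalls exactly where the theorem needs the most work, and you say so yourself: the injectivity of $\E$ is never established, only flagged as ``the genuine obstacle.'' What your construction actually proves is that $\mathrm{id}_\M \simeq (\pi\circ\D)\circ\E$, i.e.\ that $\M$ is homotopy-dominated by $\Z$ --- and domination is strictly weaker than homeomorphism: the circle is homotopy-dominated by a figure-eight via the pinch map, which is precisely the pathology you are worried about. The paper closes this step directly and briefly: if $\E(x_1)=\E(x_2)=z$ with $x_1\neq x_2$, then the single point $\D(z)$ lies within $\epsilon<\tau(\M)$ of both $x_1$ and $x_2$, which the paper asserts contradicts the definition of the reach; the same argument is repeated to show $\D$ is injective on $\Z$, and then the compact-to-Hausdorff bijection argument (which you also invoke, and which is fine) yields both $\Z\cong\M$ and $\D(\Z)\cong\M$. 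So measured against the paper, your write-up is missing the one substantive lemma; everything else you do (tube, projection $\pi$, exact case, compactness argument) is either shared with the paper or additional scaffolding around the hole.

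That said, two features of your route are worth recording. First, your straight-line homotopy through the tube is sound and by itself proves the second assertion with no injectivity at all: if $\Z$ were contractible, then $\mathrm{id}_\M\simeq(\pi\circ\D)\circ\E$ would be null-homotopic, contradicting non-contractibility of $\M$. (Both you and the paper silently use ``simply connected domain implies contractible,'' which fails in $\real^d$ for $d\geq 3$, e.g.\ $\real^3\setminus\{0\}$; that blemish is shared.) Second, your skepticism about the reach argument is well-founded rather than a confession of weakness: a point within $\epsilon$ of two distinct manifold points need not lie in the medial set $\mathcal{G}$, since its unique nearest point can be a third point and the two distances need not be equal or minimal. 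Concretely, on the unit circle ($\tau=1$) one can pinch two points at chord distance $\delta\ll\epsilon$ to a single latent point and decode that point to the chord midpoint, extending by a map $\delta$-close to the identity elsewhere; this produces an $\epsilon$-faithful auto-encoder, $\epsilon<\tau$, whose latent space is a figure-eight. So the configuration you identified survives the paper's one-line contradiction, and closing your gap would require either a genuinely new argument or a strengthening of the hypotheses --- but as a proof of the stated theorem, your proposal establishes the homotopy-level obstruction only, not the claimed homeomorphism.
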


This theorem provides a necessary condition of the latent space topology for a faithful representation. It implies that a data manifold with complex topology can not be $\epsilon$-faithfully represented by an auto-encoder with a simply connected latent space like $\real^d$ used in plain auto-encoders. For example, a plain auto-encoder with a single 2 dimensional latent space cannot $\epsilon$-faithfully represent a sphere. 


\subsection{Main Theoretical Results}
To address the issue of topology violation in plain auto-encoders,  we propose a multi-chart model based on the definition of manifolds. We discuss the main results for approximating data manifolds using multi-chart auto-encoders with training data size and network size estimation. 

A data set $X =\{x_i\}_{i=1}^n\subset\M$ is called $\delta$-dense in $\M$ if    $\text{dist}(X,p) = \min_{x\in X}\|x - p\| <\delta, \forall p\in\M$.  We write $B^d_r$ a $d$-dimensional radius $r$ ball with $\displaystyle vol(B^d_r)  = \pi^{d/2}r^d/\Gamma(1+ d/2)$.
\begin{theorem}[Universal Manifold Approximation Theorem] Consider a d-dimensional compact data manifold $\M\subset \mathbb{R}^m$ with reach $\tau$ and $\displaystyle C = vol(\M)/vol(B^d_1)$. Let $X=\{x\}_{i=1}^n$ be a training data set drawn uniformly randomly on $\M$. For any $0< \epsilon <\tau/2$, if the cardinality of the training set $X$ satisfies
\begin{equation}
\label{eqn:SampleNum}
n > \beta_1 \Big(\log (\beta_2) + \log (1/\nu) \Big) \approx O(-d \epsilon^{-d}\log\epsilon)
\end{equation}
where $ \displaystyle \beta_1 = C~ \Big(\frac{\epsilon}{4}\Big)^{-d}\Big(1 - (\frac{\epsilon}{8\tau})^2\Big)^{-d/2} $ and $\displaystyle \beta_2 = C~\Big(\frac{\epsilon}{8}\Big)^{-d}\Big(1 - (\frac{\epsilon}{16\tau})^2\Big)^{-d/2} $, then
 based on the training data set $X$, there exists a CAE $(\Z,\E,\D)$ with $L>d$ charts $\epsilon$-faithfully representing $\M$  with probability $1 - \nu$. 
 In other words, we have 
\[  \sup_{x\in\M} \|x - \D\circ\E(x)\| \leq \epsilon. \]
Moreover, the encoder $\E$ and the decoder $\D$  has at most $ O(Lmd \epsilon^{-d - d^2/2}(-\log^{1+d/2}\epsilon))$ parameters and $\displaystyle   O(-d^2\log_2\epsilon/2)$ layers.

\label{thm:UMA}
\end{theorem}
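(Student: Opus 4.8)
The plan is to split the argument into a probabilistic sampling part and a deterministic geometric-plus-network part. The sampling part shows that, with probability at least $1-\nu$, the random training set $X$ forms a sufficiently dense net on $\M$; the construction part then builds an explicit CAE on top of such a net and tracks its size and depth.

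First I would handle the sampling. I would fix a maximal $(\epsilon/8)$-packing $\{c_j\}_{j=1}^N$ of $\M$ in the ambient metric. Maximality makes it an $(\epsilon/8)$-net, so every $p\in\M$ lies within $\epsilon/8$ of some $c_j$. Using the reach $\tau$ I would lower-bound the measure of a metric ball: for $r<\tau$, $\mathrm{vol}(B_r(c)\cap\M)\ge \mathrm{vol}(B^d_r)\,(1-(r/2\tau)^2)^{d/2}$ (a Niyogi--Smale--Weinberger-type estimate), and correspondingly upper-bound $N$ by a packing/volume argument; this is exactly where $\beta_2$ (radius $\epsilon/8$) enters as a bound on $N$, since $(1-((\epsilon/8)/2\tau)^2)^{d/2}=(1-(\epsilon/16\tau)^2)^{d/2}$. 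For each net point the probability that one uniform sample lands in $B_{\epsilon/4}(c_j)\cap\M$ is at least $1/\beta_1$ (radius $\epsilon/4$, giving the factor $(1-(\epsilon/8\tau)^2)^{d/2}$), so the probability that all $n$ samples miss it is at most $(1-1/\beta_1)^n\le e^{-n/\beta_1}$. A union bound over the $N\le\beta_2$ net points gives total failure probability at most $\beta_2 e^{-n/\beta_1}$; requiring this to be $\le\nu$ yields precisely $n>\beta_1(\log\beta_2+\log(1/\nu))$, which is \eqref{eqn:SampleNum}. On this event every $c_j$, hence every $p\in\M$, has a training sample within $\epsilon/8+\epsilon/4<\epsilon/2$, so $X$ is $(\epsilon/2)$-dense.

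Next I would build the charts. Conditioned on density, I would select $L>d$ sample points $\{x_\alpha\}$ whose $(\epsilon/2)$-balls cover $\M$ and set $\M_\alpha=B_r(x_\alpha)\cap\M$ with $r<\tau$. Because $r<\tau$, the orthogonal projection $\phi_\alpha=\pi_{T_{x_\alpha}\M}$ onto the tangent plane is a diffeomorphism onto its image, yielding a bi-Lipschitz coordinate map; I would take the encoder on chart $\alpha$ to be an affine realization of this projection, which is exact and costs $O(md)$ weights. The decoder $\D_\alpha=\phi_\alpha^{-1}$ is a $C^2$ map $\mathbb{R}^d\to\mathbb{R}^m$ whose second fundamental form is controlled by $1/\tau$, so its reconstruction error over a radius-$r$ ball is $O(r^2/\tau)$; choosing $r\asymp\epsilon$ and decoding each $x$ through a chart that genuinely contains it keeps the error below $\epsilon$. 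I would then invoke a Yarotsky-type ReLU approximation result for the $C^2$ coordinate functions: each of the $m$ components on $d$ variables is approximable to the required accuracy with $O(\epsilon^{-d/2}\,\mathrm{polylog}(1/\epsilon))$ weights and $O(\log(1/\epsilon))$ depth. Summing over the $m$ components, over $L$ charts with $L\asymp\epsilon^{-d}$, and bookkeeping the extra factors from the accuracy rescaling needed to beat the reach-induced distortion, produces the stated $O(Lmd\,\epsilon^{-d-d^2/2}(-\log^{1+d/2}\epsilon))$ parameter count and $O(-d^2\log_2\epsilon/2)$ depth.

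The hard part will be the construction/network step rather than the sampling step. Three issues need care: (i) the chart-assignment map is intrinsically discontinuous, so I must use an overlapping (partition-of-unity-style) decoding or a gating network and verify that the global composition still satisfies $\sup_{x\in\M}\|x-\D\circ\E(x)\|\le\epsilon$ without error blow-up at chart boundaries; (ii) the transition functions on overlaps must remain consistent (the cyclic conditions) while being realizable by finite networks, which is where the hypothesis $\epsilon<\tau/2$ is essential to keep the projections diffeomorphic on the overlaps; and (iii) the quantitative accounting that turns the per-component $C^2$ rates into the exponents $-d-d^2/2$ and $\log^{1+d/2}$ — tracking how $L\asymp\epsilon^{-d}$, the per-coordinate approximation cost, and the reach-dependent accuracy rescaling multiply together is the most delicate and error-prone part of the proof.
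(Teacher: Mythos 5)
Your two-stage skeleton (probabilistic density of $X$, then a deterministic per-chart construction glued into a CAE) matches the paper, but the two halves relate to the paper differently. For the sampling half, you re-derive from scratch (maximal packing, reach-based volume lower bound, union bound) exactly the content of Proposition 3.2 of Niyogi--Smale--Weinberger, which the paper simply cites; your constants $\beta_1,\beta_2$ and the conclusion of $\epsilon/2$-density come out correctly, so this part is fine, just not new relative to citing the proposition. For the construction half you take a genuinely different route: tangent-plane projections as encoders and Yarotsky-type smooth ($C^2$) approximation for the decoders, whereas the paper uses the data itself --- it maps each chart to the tangent space by the $\log$ map, builds a Delaunay simplicial complex whose vertices are the training points, and represents the resulting piecewise-linear encoder/decoder \emph{exactly} by ReLU networks via hat functions and $\min/\max$ identities (Theorem~\ref{thm:simplexfun}, Lemma~\ref{lemma:min}). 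The paper's route buys two things yours does not: the constructed network interpolates the training data exactly ($\D\circ\E(x_i)=x_i$, hence is a global minimizer of the empirical loss), and the network is built from $X$ alone, which is what the phrase ``based on the training data set $X$'' is meant to capture; your decoders instead approximate the true inverse charts $\phi_\alpha^{-1}$, objects determined by $\M$ rather than by $X$.

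The genuine gap is in your step (iii), and you should be aware it cannot be repaired the way you describe. The exponents $\epsilon^{-d-d^2/2}$ and $(-\log\epsilon)^{1+d/2}$ are not generic bookkeeping artifacts: in the paper they arise because the Delaunay triangulation on the $O(n/L)$ points of a chart has $O((n/L)^{\lceil d/2\rceil})$ simplices, giving $O(md(n/L)^{1+d/2})$ parameters per chart, and substituting $n=O(-d\epsilon^{-d}\log\epsilon)$ with $L$ a \emph{fixed} manifold-dependent constant ($L>d$, geodesic balls of fixed radius) yields precisely those exponents. Your accounting never touches this quantity, and your own numbers are internally inconsistent: with chart radius $r\asymp\epsilon$ the inverse chart deviates from its linearization by only $O(\epsilon^2/\tau)<\epsilon$, so affine decoders already suffice and the Yarotsky $\epsilon^{-d/2}$ rates play no role; with fixed-radius charts (the regime the theorem intends, where $L$ is constant) your per-chart cost is $O(m\epsilon^{-d/2}\mathrm{polylog}(1/\epsilon))$, which does not produce the stated exponents either. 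Because the theorem only claims ``at most,'' a cleaned-up version of your construction could still prove the statement (indeed with better constants), but you must then choose one consistent regime, prove the covering number and per-chart network size in that regime, and verify the stated bounds as upper bounds --- rather than asserting, as the proposal does, that the bookkeeping ``produces'' exponents that in fact are fingerprints of the simplicial construction you are not using. The discontinuous chart-assignment issue you raise in (i) is, by contrast, a non-issue in this framework: the CAE formalism itself (disjoint-union latent space glued by an equivalence relation, with per-chart encoders/decoders and a chart predictor) localizes the faithfulness requirement to each chart, which is exactly how the paper's gluing step concludes.
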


This main result characterizes the approximation behavior of the CAE topologically and geometrically.  From theorem \ref{thm:faithfulrep}, we have that the latent space $\Z$ in the above network is homeomorphic to the data manifold and that the generating manifold $\D(\Z)$ preserves the topology of $\M$. Moreover, this theorem  provides estimations of requiring training data size and a network size to geometrically approximate the data manifold $\epsilon$-closely. 

\paragraph{Sketch of proof}
We first apply a result from Niyogi-Smale-Weinberger~\cite{niyogi2008finding} to obtain a estimation of the cardinality of the training set $X$ on $\M$ satisfying that $X$ is $\epsilon/2$-dense on $\M$. Then, we conduct a constructive proof to show that there exists a network satisfying the required accuracy. 

We begin with dividing the manifold $\M$ into $L$ charts satisfying $\M = \bigcup_\ell \M_\ell$. We parameterize each chart $\M_\ell$ on a $d$-dimensional tangent space $\Z_\ell$ using the $\log$ map. Then, based on a simplicial structure induced from the image of the training data set on the tangent space, we obtain a simplicial complex $\S_\ell$ whose vertices are provided by the training data on $\M_\ell$. After that, we construct a neural network to represent the piecewise linear map between the latent space and $\S_\ell$. This gives an essential ingredient to construct an encoder $\E_\ell$ and a decoder $\D_\ell$. Furthermore, we also argue that the difference between $\M_\ell$ and its simplicial approximation $\S_\ell$ is bounded above by $\epsilon$. More precisely, this local chart approximation can be summarized as:

\begin{theorem}[Local chart approximation]Consider a geodesic neighborhood $\M_r(p) = \{x\in\M~|~ d(p,x) < r \}$ around $p\in\M$. For any $0 < \epsilon  < \tau(\M)$, if $X= \{x_i\}_{i=1}^n$ is an $\epsilon/2$-dense sample drawn uniformly randomly on $\M_r(p)$, then there exists an auto-encoder $(\Z,\E,\D)$ which is an $\epsilon$-faithful representation of $\M_r(p)$. In other words, we have 
\[  \sup_{x\in\M_r(p)} \|x - \D\circ\E(x)\| \leq \epsilon. \]
Moreover, $\E$ and $\D$  have at most $O(mdn^{1+d/2})$ parameters and $O(\frac{d}{2} \log_2(n))$ layers.
\label{thm:localchart}
\end{theorem}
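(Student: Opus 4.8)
The plan is to build the chart directly from the local Riemannian geometry of $\M$ and then realize the resulting piecewise-linear (PL) maps as ReLU networks. Since the reach bounds both the injectivity radius and the sectional curvature of $\M$, for a geodesic ball whose radius $r$ is controlled by $\tau$ the exponential map $\exp_p\colon T_p\M\to\M_r(p)$ is a diffeomorphism onto $\M_r(p)$ with smooth inverse $\log_p$. Identifying the tangent space $T_p\M$ with $\real^d$, I would take the latent domain to be $\Z=\log_p(\M_r(p))\subset\real^d$, set the encoder to approximate $\log_p$ (for instance by the orthogonal projection $\pi$ onto the affine tangent plane at $p$, which is bi-Lipschitz with constants depending only on $r/\tau$ and injective on $\M_r(p)$ for $r\lesssim\tau$), and set the decoder to approximate $\exp_p$. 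Because a geodesic ball is contractible, there is no topological obstruction here, so the whole difficulty is geometric and combinatorial: to approximate these two smooth maps by PL maps meeting the $\epsilon$ bound while remaining cheap to encode.

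Next I would discretize. Pushing the $\epsilon/2$-dense sample into the latent domain by $z_i=\pi(x_i)$ yields, by the bi-Lipschitz property, an $O(\epsilon)$-dense point set in $\Z$; over it I build a simplicial complex $\S$ (e.g.\ a Delaunay triangulation) covering $\Z$ with every simplex of diameter $O(\epsilon)$. The decoder is then defined piecewise by barycentric interpolation of the corresponding data points: if $z=\sum_j\lambda_j z_{i_j}$ lies in a simplex with vertices $z_{i_j}$, put $\D(z)=\sum_j\lambda_j x_{i_j}$, and take $\E=\pi$. To bound the error I would observe that $\D\circ\E(x)$ and $x$ share the same tangent-plane projection, so their separation is purely normal and equals the error of linearly interpolating, over a simplex of diameter $h=O(\epsilon)$, the normal graph of a manifold of reach $\tau$. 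Since that graph has Hessian of order $1/\tau$, the interpolation error is $O(h^2/\tau)=O(\epsilon^2/\tau)$, which is at most $\epsilon$ precisely because $\epsilon<\tau$; hence $\sup_{x\in\M_r(p)}\|x-\D\circ\E(x)\|\le\epsilon$.

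The remaining and, I expect, most delicate step is to realize $\E$ and $\D$ as ReLU networks within the stated budgets; I regard this, together with the combinatorial control of the triangulation, as the crux, since the $\epsilon$-estimate above is a routine consequence of reach and density. Both maps are continuous PL: $\E$ is a single affine map, and $\D$ is PL with one affine piece per simplex of $\S$. I would count the pieces using the upper bound theorem for polytopes, which gives a $d$-dimensional Delaunay triangulation of $n$ points at most $O(n^{\lceil d/2\rceil})=O(n^{1+d/2})$ simplices. Each piece is an affine map $\real^d\to\real^m$ carrying $O(md)$ parameters, so storing all pieces costs $O(mdn^{1+d/2})$ parameters. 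Finally, writing the CPWL decoder as a $\max$--$\min$ of its affine selection functions and evaluating that lattice by a balanced binary circuit of constant-size ReLU $\max/\min$ gadgets gives depth $O(\log_2 n^{\lceil d/2\rceil})=O(\tfrac{d}{2}\log_2 n)$, matching the claimed $O(mdn^{1+d/2})$ parameters and $O(\tfrac{d}{2}\log_2 n)$ layers. The main obstacle is keeping the depth logarithmic while faithfully reproducing the exact PL homeomorphism: a naive one-layer-per-simplex construction would blow up the depth, so the $\max$--$\min$ (lattice) representation of CPWL functions is what makes the budget achievable.
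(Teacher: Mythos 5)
Your proposal follows the same skeleton as the paper's proof: tangent-space coordinates at $p$, a Delaunay triangulation of the latent samples with $O(n^{\lceil d/2\rceil})$ simplices, a decoder defined by barycentric interpolation of the data points over that triangulation, and a realization of the resulting piecewise-linear maps as ReLU networks of depth $O(\frac{d}{2}\log_2 n)$. Within that skeleton you make two genuinely different choices. First, your encoder is a single affine map (orthogonal projection $\pi$ onto the tangent plane at $p$), whereas the paper's encoder is a nearest-simplex projection $F^{-1}\circ\mathrm{Proj}_\alpha$ assembled from hat functions weighted by an approximate indicator $\chi$; the paper itself concedes in a footnote that this assembly is not a pure feed-forward ReLU network (it requires multiplication gadgets). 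Your choice is simpler and avoids that wart entirely, at the price of requiring $r\lesssim\tau$ so that $\M_r(p)$ is a graph over $T_p\M$ (the paper needs a comparable restriction on $r$ for $\exp_p$ to be invertible, so this costs little). Second, your error analysis is cleaner: since $\pi$ is affine, $\pi(\D(\pi(x)))=\pi(x)$, so the error is purely normal and reduces to the linear-interpolation error $O(h^2/\tau)$ of the normal graph, whereas the paper argues via the chord-versus-arc bound $\tau-\sqrt{\tau^2-(\epsilon/2)^2}$ for the "worst scenario." Both yield $O(\epsilon^2/\tau)\leq\epsilon$, but yours is the more transparent and standard finite-element-style estimate.

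The one step where your sketch has a real gap is the ReLU realization, which you correctly identify as the crux. You appeal to a generic max--min (lattice) representation of the CPWL decoder and count parameters as (number of simplices) $\times$ (parameters per affine piece). But a generic lattice representation of a CPWL function need not have size linear in its number of affine pieces: the outer max ranges over index sets whose number and sizes are not obviously bounded by the number of simplices, so the leaf count of the balanced circuit --- which is what determines the parameter count --- is not obviously $O(n^{1+d/2})$. Relatedly, ``storing all pieces'' undercounts: the parameters of the network live in the selection circuitry, not just in the affine pieces. The paper closes exactly this gap with its simplicial-function theorem (Theorem~\ref{thm:simplexfun}): each coordinate of the interpolant is written as $\sum_{\ell} f(v_\ell)\,\eta_{v_\ell}$, where each hat function $\eta_{v}=\max\bigl\{\min_{i\in\N^1(v)}\bigl(1-\vec{1}^\top(W_i x+b_i)\bigr),\,0\bigr\}$ involves only the simplices in the first ring of $v$, and the min is implemented by the ReLU min-of-functions lemma of Arora et al.\ (Lemma~\ref{lemma:min}). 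This per-vertex localization is what makes the bound $O(mdn^{1+d/2})$ parameters and $O(\frac{d}{2}\log_2 n)$ layers legitimate. If you replace your generic lattice argument with this hat-function decomposition (which your triangulation already supports), your proof is complete and matches the claimed budgets.
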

After the construction for each local chart, the global theorem can be obtained by patching together results of each local chart construction. This leads to the desired CAE. \hfill\qedsymbol

\section{Network Architecture}\label{sec:Arch}
\begin{figure}[ht]
  \centering
  \includegraphics[width=.9\textwidth]{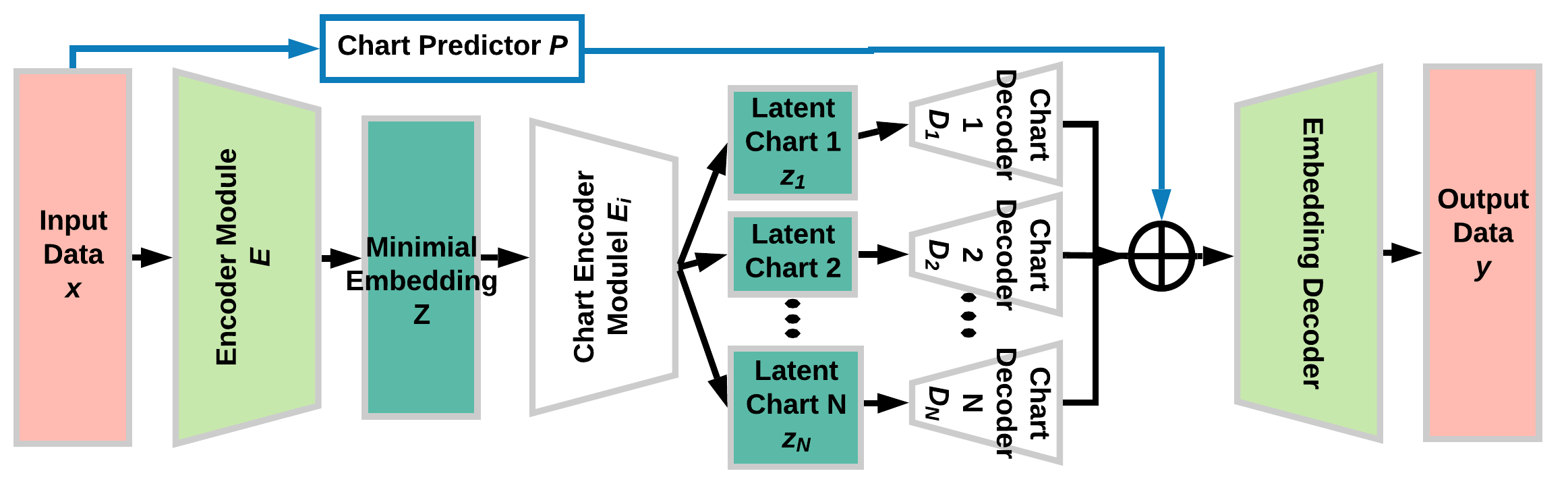}
  \vskip -0.1in
  \caption{Architecture diagram of CAE.}
  \label{fig:Archetecture}
\end{figure}
Motivated from the aforementioned analysis on the necessity of multi-charts,  we propose CAE (Figure~\ref{fig:Archetecture}) to integrate the manifold structure in the latent space. An input data point $x\in\RR^m$ is passed into an encoding module $\textbf{E}$, which creates an initial latent representation $z\in\RR^l$. Next, a collection of chart parameterizations---encoders $\textbf{E}_{\alpha}$ as analogy of $\phi_{\alpha}$---map $z$ to several chart spaces $\Z_{\alpha}$, which collectively define the multi-chart latent space. Each chart representation $z_{\alpha}\in \Z_{\alpha}$ is then passed into the corresponding decoding function---a chart decoder $\textbf{D}_{\alpha}$ as analogy of $\phi_{\alpha}^{-1}$---which produces an approximation $w_{\alpha}$ of the initial latent representation $z$. Next, a chart prediction module $\textbf{P}$ decides which chart(s) $x$ lies on and consequently selects the corresponding $w_{\alpha}$('s), which, after being mapped by the final decoder $\D$, becomes the reconstruction of $x$. The chart transition functions may be recovered by composing the chart decoders, initial encoder and the chart encoders. Hence, their explicit representations are not essential to the neural network architecture and we defer the discussion to the supplementary material.

\paragraph{Initial Encoder.}
Note that a $d$-dimensional manifold, regardless of the ambient dimension $m$, has a $C^1$ \emph{isometric} embedding in $\real^{2d}$, according to the Nash--Kuiper theorem~\cite{nash1954c1,kuiper1955c1}. Hence,
the initial encoder $\textbf{E}$ serves as a dimension reduction step, which yields a low dimensional \emph{isometric} embedding of the data from $\RR^m$ to $\RR^l$. For example, given an $\RR^3$ torus embedded in $\RR^{1000}$, the initial encoder maps from $\RR^{1000}$ to a lower dimensional space, ideally $\RR^3$. Note that however three is not the \emph{intrinsic} dimension of the torus (rather, two is); hence, a subsequent chart encoder to be discussed soon serves the purpose of mapping from $\RR^3$ to $\RR^2$.  Ideally, the initial dimension reduction step preserves the original topological and geometric information of the data manifold by reducing to the minimal isometric embedding dimension. A benefit of using an initial encoder is to reduce the subsequent computational costs in decoding. This step can be replaced with a homeomorphic variational auto-encoder~\cite{falorsi2018explorations} when the topology is known, or with an appropriately chosen random projection~\cite{baraniuk2009random,cai2018enhanced}.

\paragraph{Chart Encoder.}
This step locally parameterizes the data manifold to the chart space, whose dimension is ideally the intrinsic dimension of the manifold. The chart splits are conducted through a small collection of networks $\{\textbf{E}_\alpha\}_{\alpha}$ that take $z \in \RR^l$ as input and output several local coordinates $z_\alpha\in \Z_\alpha$. The disjoint union $\mathcal{Z} = \bigsqcup_{\alpha=1}^N \Z_\alpha$ is the multi-chart latent space. In practice, we set $\Z_\alpha = (0,1)^d$ for each $\alpha$ and regularize the Lipschitz constant of the corresponding encoding map to control the size and regularity of the region $\M_\alpha \subset \M$.

\paragraph{Chart Decoder.}
Each latent chart is equipped with a decoder $\textbf{D}_\alpha$, which maps from the chart latent space $\Z_\alpha$ back to the embedding space $\RR^l$.

\paragraph{Chart Prediction.}
The chart prediction module $\textbf{P}$ produces confidence measure $p_{\alpha}$ for the $\alpha$-th chart. For simplicity we let the $p_{\alpha}$'s be probabilities that sum to unity. Ideally, if the input point lies on a single chart, then $p_{\alpha}$ should be one for this chart and zero elsewhere. If, on the other hand, the input point lies on more than one  (say, $k$) overlapping chart, then the ideal $p_{\alpha}$ is $1/k$ for these charts. In implementation, one may use the normalized distance of the data point to the chart center as the input to $\textbf{P}$. However, for complexly structured data, the charts may have different sizes (smaller for high curvature region and larger for flat region), and hence the normalized distance is not a useful indication. Therefore, we use $x$, $z$, and/or $z_{\alpha}$ as the input to $\textbf{P}$ instead. 

\paragraph{Embedding Decoder.}
The final decoder $\D$, which is shared by all charts, works as the inverse of the initial encoder $\E$, mapping from the embedding space $\RR^l$ to the original ambient space $\RR^m$.

\paragraph{Final Output.}
If we summarize the overall pipeline, one sees that CAE produces $y_\alpha = \textbf{D}\circ\textbf{D}_\alpha\circ\textbf{E}_\alpha\circ\textbf{E}(x)$ for each chart as a reconstruction to the input $x$. Typically, the data lies on only one or at most a few of the charts, the confidence of which is signaled by $p_{\alpha}$. If only one, the corresponding $y_{\alpha}$ should be considered the final output; whereas if more than one, each of the correct $y_{\alpha}$'s should be similarly close to the input and thus taking either one is sensible. Thus, we select the $y_{\alpha}$ that maximizes $p_{\alpha}$ as the final output.

\paragraph{Loss Function.}
Recall that a chart decoder output is $y_\alpha = \textbf{D}\circ\textbf{D}_\alpha\circ\textbf{E}_\alpha\circ\textbf{E}(x)$; hence, $e_\alpha = \|x - y_\alpha \|^2$ denotes the reconstruction error for the chart indexed by $\alpha$. If $x$ lies on only one chart, this chart should be the one that minimizes $e_{\alpha}$. Even if $x$ lies on more than one chart, the minimum of $e_{\alpha}$ is still a sensible reconstruction error overall. Furthermore, to obtain sensible chart prediction probabilities $\{p_{\alpha}\}$, we will take the cross entropy between them and $\{\ell_\alpha = \mathrm{softmax}(-e_\alpha)\}$ and minimize it. If $x$ lies on several overlapping charts, on these charts the $y_\alpha$'s are similar and off these charts, the $y_\alpha$'s are bad enough that the softmax of $-e_\alpha$ is close to zero. Hence, minimizing the cross entropy ideally produces equal probabilities for the relevant charts and zero probability for the irrelevant ones.

Summarizing these two considerations, we use the loss function
\begin{equation}\label{eqn:LossPP}
  \mathcal{L}(x,W):=   \Big( \min_\alpha e_\alpha \Big) - \sum_{\beta=1}^L  \ell_{\beta} \log (p_{\beta}),
\end{equation}
where $W$ denotes all network parameters and $L$ is the number of charts. Here the first term measures the fidelity of the model and the second term trains the chart prediction module. An additional regularization term is also used in the loss to control the distortion of the encoders by penalizing their Lipschitz constant. 

\paragraph{Regularization}\label{sec:Reg}

We introduce regularization to stabilize training by balancing the size of $\M_\alpha$ and avoiding a small number of charts dominating the data manifold. For example, a sphere $S^2$ needs at least two 2-dimensional charts. However, if we regularize the network with only $l_2$ weight decay, it may be able to well reconstruct the training data by using only one chart but badly generalizes, because the manifold structure is destroyed.

The idea is to add a Lipschitz regularization to the chart encoders to penalize mapping nearby points far away. Formally, the \emph{Lipschitz constant} of a function $f$ is $\sup_{x \neq y} |f(y)-f(x)|/|x-y|$. Since the chart spaces are fixed as $(0,1)^d$, controlling the Lipschitz constant of a chart function will control the maximum volume of decoding region $\textbf{D}_{\alpha}((0,1)^d)$ on the data manifold. 

The Lipschitz constant of a composition of functions can be upper bounded by the product of those of the constituent functions. Moreover, the Lipschitz constant of a matrix is its spectral norm, and that of ReLU is 1. Hence, we can control the upper bound of the Lipschitz constant of a chart encoder function by regularizing the product of the spectral norms of the weight matrices in each layer.

To summarize, denote by $W_{\alpha}^k$ the weight matrix of the $k$th layer of $\textbf{E}_{\alpha}$. Then, we use the regularization
\begin{equation}\label{eqn:regularization}
  \mathcal{R}_{Lip}:= \left(\max_\alpha \prod_{k} ||W_\alpha^k ||_2\right) + \frac{1}{N} \sum_{\beta=1}^N  \prod_{k} ||W_\beta^k ||_2 .
\end{equation}
Here, the first term aims at stopping a single chart from dominating, and the second term promotes the smoothness of each chart.

\paragraph{Pre-Training}\label{sec:Initialization}

Since CAE jointly predicts the chart outputs and chart probabilities, it is important to properly initialize the model, so that the range of each decoder lies somewhere on the manifold and the probability that a randomly sampled point lies in each chart is approximately equal. To achieve so, we use furthest point sampling (FPS) to select $N$ points $x_{\alpha}$ from the training set as seeds for each chart. Then, we separately pre-train each chart encoder and decoder pair, such that $x_{\alpha}$ is at the center of the chart space $U_{\alpha}$. We further define the chart prediction probability as the categorical distribution and use it to pre-train the chart predictor. The loss function for each $\alpha$ is
\begin{equation}
    \label{eqn:InitLoss}
  \mathcal{L}_{init}(x_\alpha) := \|x_\alpha - \textbf{D}\circ \textbf{D}_\alpha\circ \textbf{E}_\alpha \circ \textbf{E}(x_\alpha)\|^2  
  + \|\textbf{E}_\alpha\circ \textbf{E}(x_\alpha)
  - [.5]^d\|^2 + \sum_{\beta=1}^N \delta_{\alpha \beta} \log (p_\beta).
\end{equation}

We can extend this pre-training idea to additionally ensure that the charts are oriented consistently, if desirable.  To do so, we take a small sample of points $\mathcal{N}(c_\alpha)$ around the center $c_{\alpha}$ of the $\alpha$-th chart and use principal component analysis (PCA) to define a $d$-dimensional embedding of this local neighborhood. Let the embeddings be $\hat{x}_\alpha(x) :=\frac{1}{C_\alpha} W_\alpha x + b_\alpha$ for all $x \in \mathcal{N}(c_\alpha)$, where $W_\alpha$ is the optimal orthogonal projection from $U_\alpha$ to $\RR^d$, $b_\alpha$ is used to shift $\hat{x}_\alpha(c_\alpha)$ to $[.5]^d$, and $C_i$ is chosen as a local scaling constant. Then, we can use this coordinate system to initialize the chart orientations by minimizing an additional regularization:
\begin{equation}\label{eqn:PCA}
  \mathcal{R}_{cords} =  \sum_{\alpha=1}^N\sum_{x \in \mathcal{N}(c_\alpha)} \langle \textbf{E}_\alpha \circ \textbf{E}(x) , \hat x_\alpha(x) \rangle.
\end{equation}

We remark that although the training and pre-training altogether share several similarities with clustering, the model does more than that. The obvious distinction is that CAE eventually produces overlapping charts, which are different from either hard clustering or soft clustering. One may see a deeper distinction from the training insights. The pre-training ensures that each decoder is on the manifold, so that when training begins no decoder stays inactive. However, during training the charts may move, overlap, and even disappear. The last possibility enables us to obtain the correct number of charts \emph{a posteriori}, as the next subsection elaborates.

\paragraph{Number of Charts}\label{sec:num.chart}

Since it is impossible to know \emph{a priori} the number $N$ of charts necessary to cover the data manifold, we over-specify $N$ and rely on the strong regularization~\eqref{eqn:regularization} to eliminate unnecessary charts. During training, a chart function $\textbf{E}_{\alpha}$ not utilized in the reconstruction of a point (i.e., $p_{\alpha} \approx 0$) does not get update from the loss function. Then, adding any convex penalty centered at 0 to the weights of $\textbf{E}_{\alpha}$ will result in weight decay and, if a chart decoder is never utilized, its weights will go to zero. In practice, we can remove these charts when the norm of the chart decoder weights falls below some tolerance. This mechanism offers a means to obtain the number of charts \emph{a posteriori}. We will show later a numerical example that illustrates that several charts do die off after training.

\paragraph{Chart Transition Functions}\label{sec:trans}

A key feature of the chart based parameterization in differential geometry is the construction of chart transition functions. As shown in Figure~\ref{fig:Manifold}, some points on the manifold may be parameterized by multiple charts. Let $\phi_{\alpha}$ and $\phi_{\beta}$ be two chart functions with chart overlap $M_{\alpha} \cap M_{\beta} \neq \emptyset$; then, the chart transition function $\phi_{\alpha \beta} = \phi_{\beta} \phi^{-1}_{\alpha}$.

In our model, the chart decoder $\textbf{D}\circ\textbf{D}_{\alpha}$ plays the role of $\phi_{\alpha}^{-1}$ and the composition $\textbf{E}_\beta \circ \textbf{E}$ plays the role of $\phi_{\beta}$. Hence, the chart transition function can be modeled by the composition:
\begin{equation}
  \phi_{\alpha \beta} :\Z_\alpha\cap \Z_\beta \rightarrow \Z_\beta\cap \Z_\alpha, \quad z_\alpha \mapsto  \textbf{E}_\beta \Bigg( \textbf{E} \Big( \textbf{D} \big( \textbf{D}_\alpha (z_\alpha) \big) \Big) \Bigg).
\end{equation}
Note that if $x \in \mathcal{M}_{\alpha} \cap \mathcal{M}_{\beta}$, then to obtain a high-quality transition function we need
\circled{1} $p_{\alpha}(x) \approx p_{\beta}(x)$,
\circled{2} $ x \approx \textbf{D} ( \textbf{D}_\alpha ( \textbf{E}_ \alpha( \textbf{E}(x) ))) $, and
\circled{3} $ x \approx \textbf{D} ( \textbf{D}_\beta ( \textbf{E}_\beta ( \textbf{E}(x) ))) $.
Each of these conditions are naturally met if the loss function~\eqref{eqn:LossPP} is well minimized. To gauge the accuracy of such transition functions, one may re-encode the decoded data in a second pass:
\begin{multline}
  \mathcal{R}_{cycle}(x):= 
  \|x - \textbf{D} \circ \textbf{D}_\beta \circ \textbf{E}_\beta \circ \textbf{E} \circ \textbf{D} \circ \textbf{D}_\alpha \circ \textbf{E}_\alpha \circ \textbf{E} (x) \|
  + \|x - \textbf{D} \circ \textbf{D}_\alpha \circ \textbf{E}_\alpha \circ \textbf{E} \circ \textbf{D} \circ \textbf{D}_\beta \circ \textbf{E}_\beta\circ \textbf{E} (x)\|.   
\end{multline}
The residual $\mathcal{R}_{cycle}(x)$ measures the error in chart transition and reconstruction.

All CAE components may be implemented by using fully connected and/or convolution layers (with ReLU activation). Details of the network architectures used in our experiments  are given in Section~\ref{app:Networks} in the supplemental material.

\section{Numerical Results}\label{sec:results}
In this section, we analyze the performance of CAE on both synthetic and benchmark data sets. We show a few illustrative examples here and give many more in the supplementary material (Section~\ref{app:Experiments}).

The implementation uses Tensorflow \cite{abadi2016tensorflow} and the built-in ADAM optimizer. The learning rate is set as \texttt{3e-4}, batch size is 64, number of epoch is 100, and the penalty for the Lipschitz regularization is \texttt{1e-2}. Demo code is available at \url{https://anonymous.4open.science/r/a40668ab-7542-4028-8709-694142a985da/}.

\textbf{Chart overlap.}
The major distinction between CAE and existing auto-encoders is that the latent space uses multiple overlapping charts. The overlap allows for smooth transition from one chart to another (through the formalism of transition functions). In Figure~\ref{fig:overlap} we show a synthetic manifold of a cat shape. This is a 1-dimensional manifold which has the same topology as a circle. As motivated in Section~\ref{sec:Manifolds}, a single-chart parameterization will destroy the topology. Here, we use four overlapping charts. The chart probability $p_{\alpha}$ varies smoothly as it moves away from the chart boundary to the interior. Taking the max of the probabilities for each point, one sees the reconstruction of the cat on the upper-right plot of Figure~\ref{fig:overlap}.
\begin{figure}[ht]
  \centering
  \includegraphics[width=.8\linewidth]{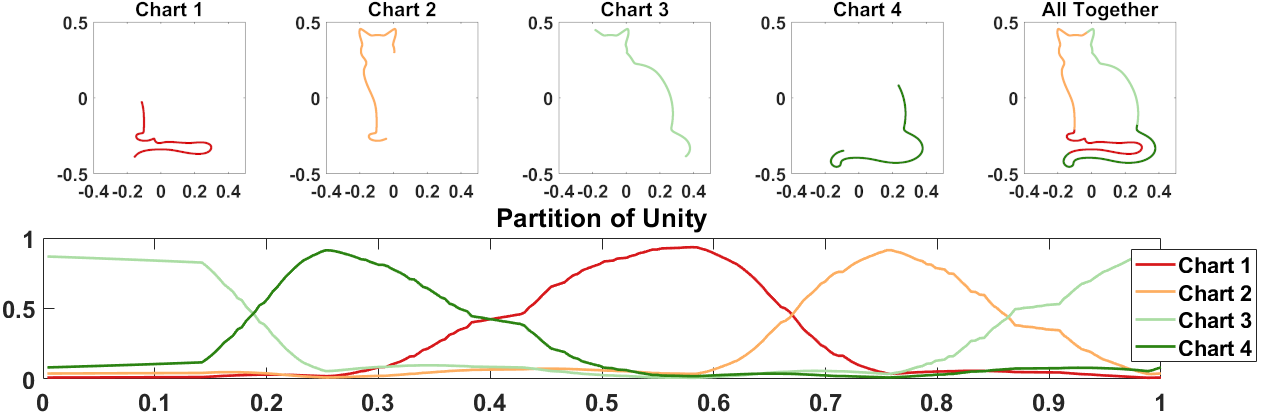}
  \caption{Top: The first four are individual charts and the last one is a concatenation of them by taking the max of chart probabilities $p_\alpha$. Bottom: Variation of $p_\alpha$ for each training point on the manifold.}
  \label{fig:overlap}
\end{figure}

\textbf{Topology preservation.}
We move from the above synthetic example to a human motion data set \cite{hodgins2015cmu}  to illustrate data topology exhibited in the latent space. In particular, the data set consists of (skeleton) gait sequences, which show a periodic pattern naturally. We apply the same preprocessing and train/test split as in~\cite{chen2015efficient,connor2019representing} and investigate the walking sequences of subject 15. After auto-encoding, we show in Figure~\ref{fig:gait}(a) the distance between consecutive frames in the latent space. The distances are expected to be similar since the discretization of motion is uniform. It is indeed the case for CAE. For plain auto-encoder, however, the cyclic walking pattern is broken in the latent space, because topology is not preserved by using a single chart (akin to the behavior that a circle is broken when the parameterization crosses $z=2\pi$ in Figure~\ref{fig:Manifold}. This phenomenon is exhibited by the periodically large distances seen in the plot. The VAE behavior is somewhere in-between, indicating that imposing structural priors to the latent space helps, despite being less effective than our manifold structure. Note that all three architectures use approximately the same number of parameters and their reconstruction error, as shown in Figure~\ref{fig:gait}(b,c),  also favors CAE (see a supplementary video for comparisons ). 
\begin{figure}[h]
  \centering
    \subfigure[]{
    \includegraphics[width=.30\linewidth]{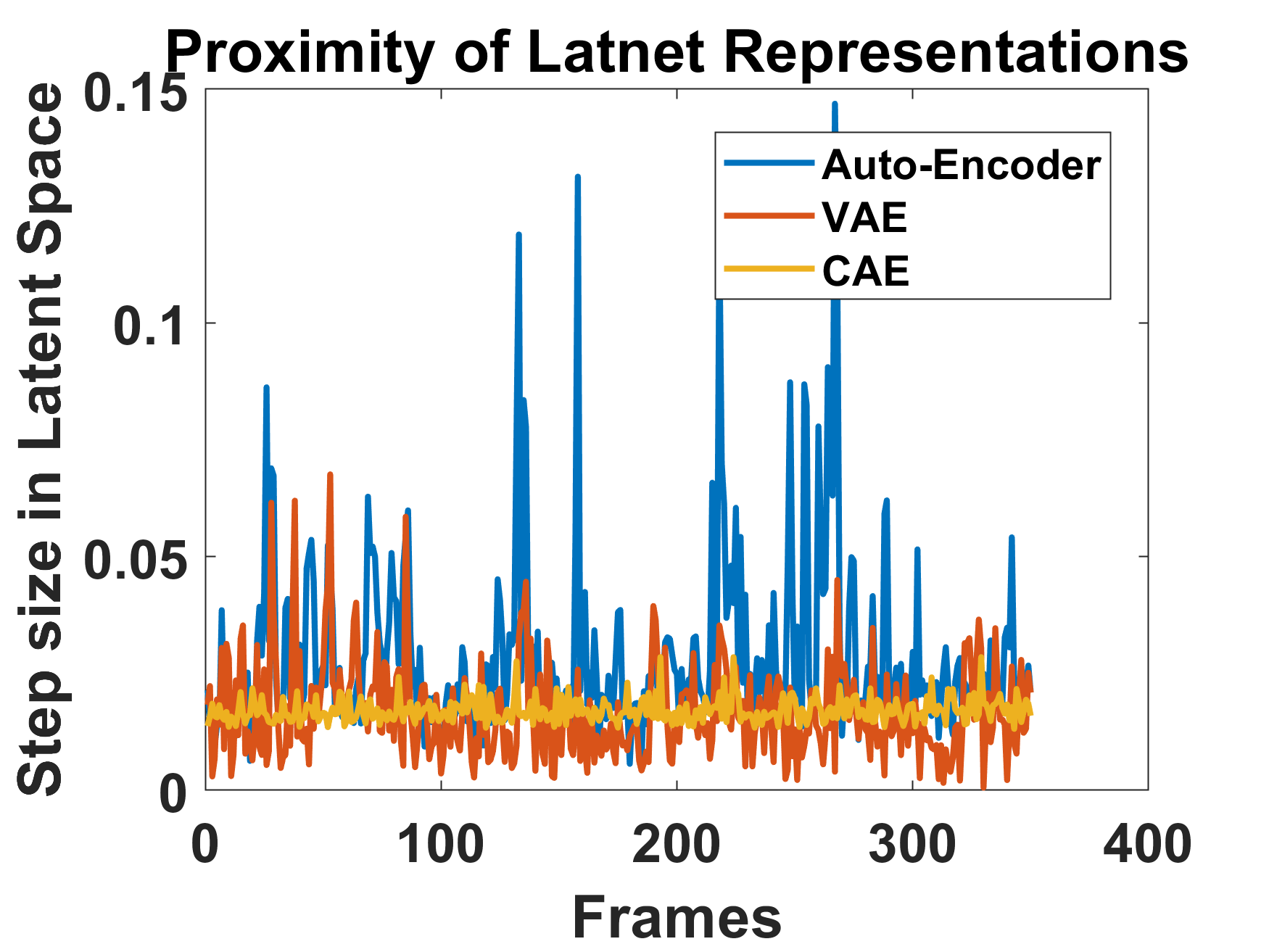}}
  \subfigure[]{
    \includegraphics[width=.30\linewidth]{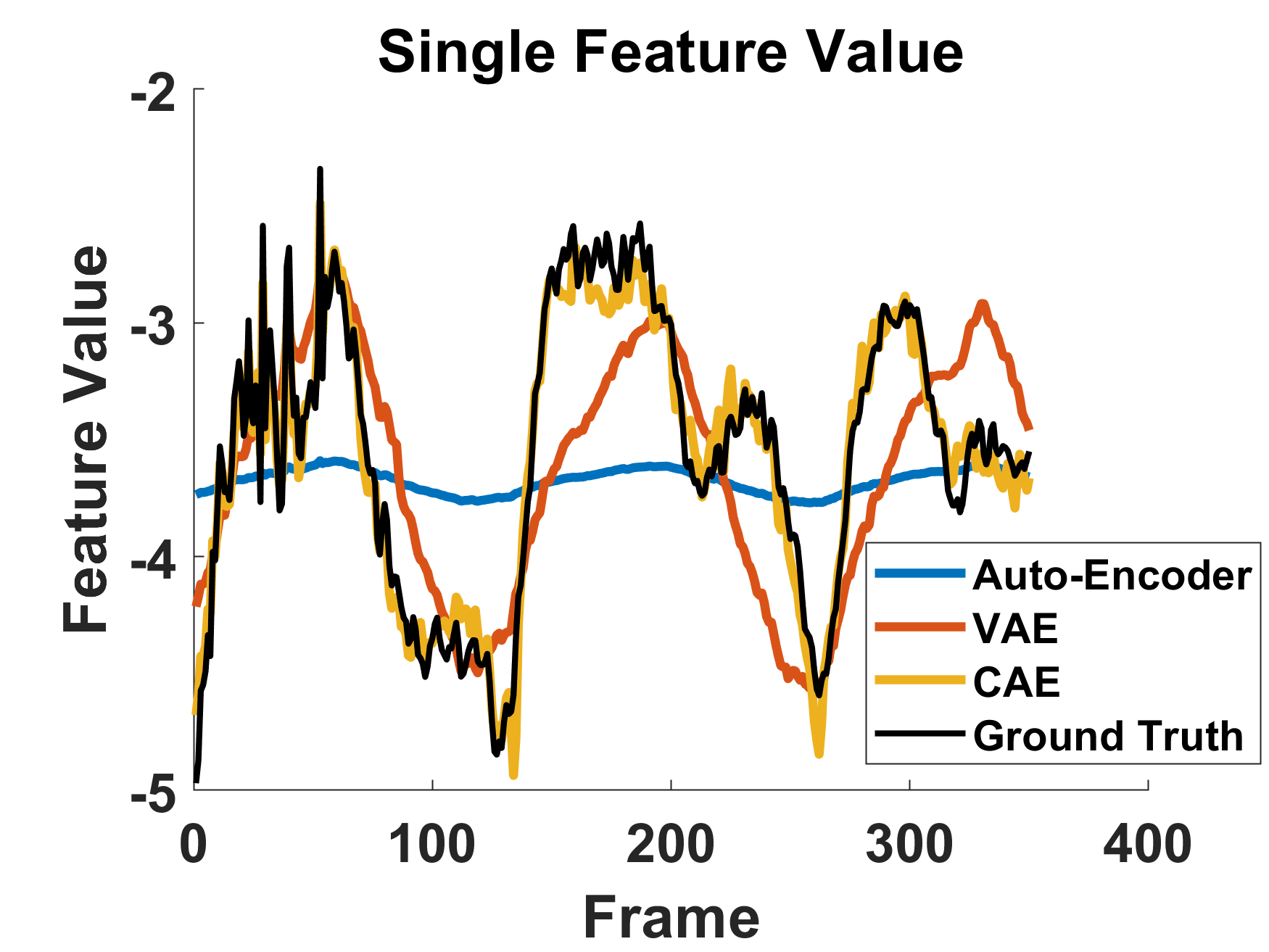}}
  \subfigure[]{
    \includegraphics[width=.30\linewidth]{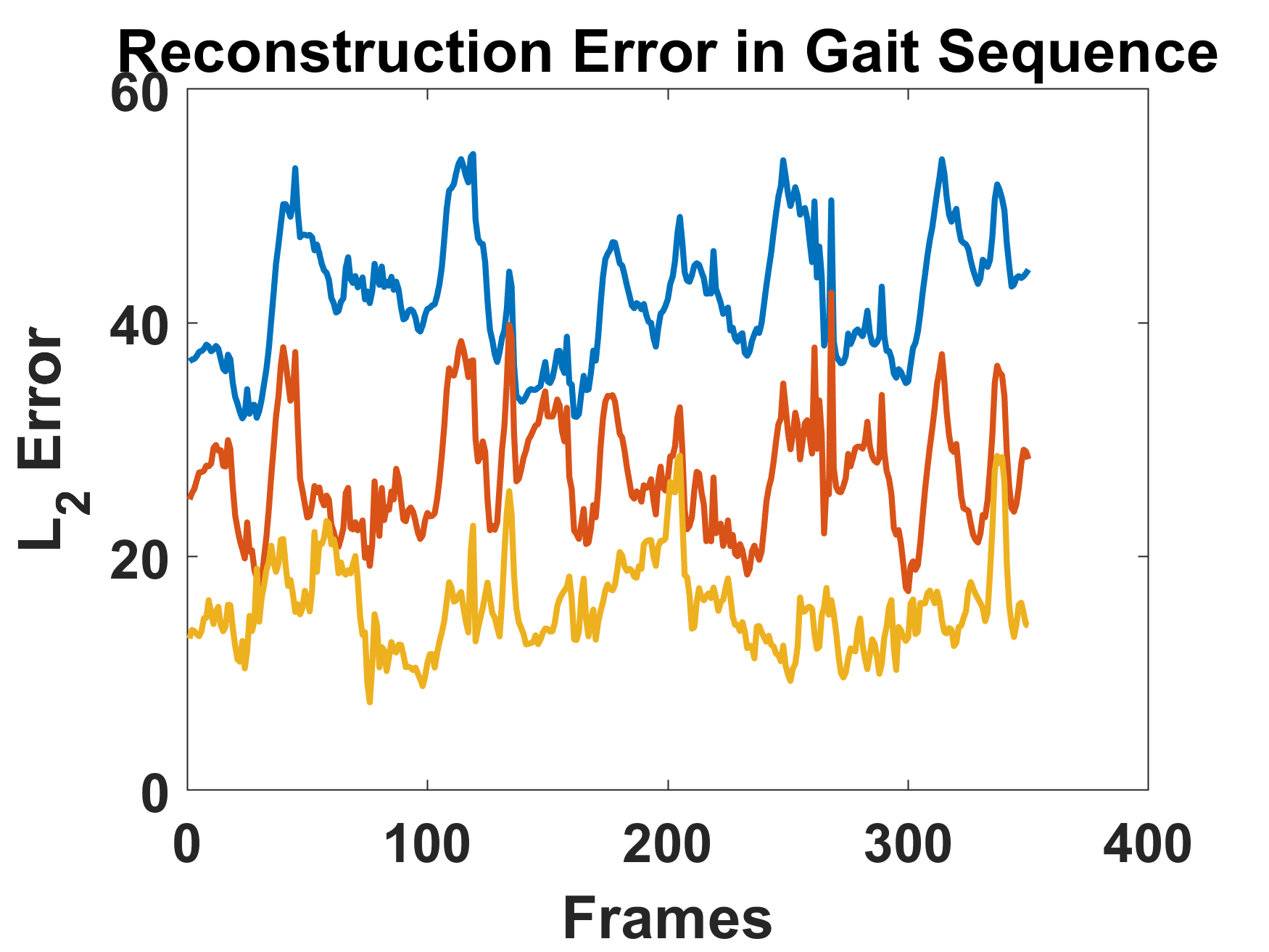}}
  \caption{Auto-encoding human motion sequence.  (a): Distance between consecutive frames in the latent space. (b): Value of a single feature. (c): Reconstruction error for all features. }
  \label{fig:gait}
\end{figure}

\textbf{Reconstruction, faithfulness, and coverage.}
The preceding experiment indicates the importance of structural prior. We more extensively compare CAE with VAE here, on MNIST \cite{lecun2010mnist} and fashion MNIST \cite{DBLP:journals/corr/abs-1708-07747}. In addition to the usual reconstruction error, we define two complementary metrics; both require a uniform sampling in the latent space to make sense. The first one, named \emph{unfaithfulness}, is the distance of a randomly generated sample from the training set. A smaller value means closer to the data manifold and hence less unfaithful (i.e., more faithful). The second metric, named \emph{coverage}, is the ratio between the number of distinct nearest training examples and the number of latent space samples. A high coverage is desirable; otherwise, some training examples (modes) are missed by the latent space. See supplementary material (Section~\ref{sec:eval}) for formal definitions.

From Table~\ref{tab:reconstruction}, one clearly sees that using the same latent dimension, VAE generally performs better with more parameters. Similarly, CAE also tends to yield better results with more parameters. However, for the same level of parameter count, CAE almost always outperforms VAE. We give more results through varying the latent dimension and the number of parameters in the supplementary material.

\begin{table}[ht]
  \caption{Reconstruction error and other metrics on MNIST and fashion MNIST.}
  \label{tab:reconstruction}
\resizebox{\textwidth}{!}{
\begin{tabular}{|c|c|c|c|c|c|c|}
\hline
\textbf{Model} & \textbf{Charts}                                         & \textbf{Latent Dim}                                      & \textbf{Param.}                                                                   & \textbf{Recon.  Error}                                                                                                  & \textbf{Unfaithfulness}                                                                                           & \textbf{Coverage}                                                                                         \\ \hline
\multicolumn{7}{|c|}{\textbf{MNIST}}                                                                                                                                                                                                                                                                                                                                                                                                                                                                                                                                              \\ \hline
VAE            & \begin{tabular}[c]{@{}c@{}}1\\ 1\\ 1\\ 1\end{tabular}   & \begin{tabular}[c]{@{}c@{}}4\\ 64\\ 8\\ 64\end{tabular}  & \begin{tabular}[c]{@{}c@{}}893,028\\ 938,088\\ 2,535,028\\ 2,625,088\end{tabular} & \begin{tabular}[c]{@{}c@{}}0.0614 $\pm$ .002\\ 0.0512 $\pm$ .002\\ 0.0564 $\pm$ .001\\ 0.0391 $\pm$ .002\end{tabular}   & \begin{tabular}[c]{@{}c@{}}0.083 $\pm$ .021\\ 0.070 $\pm$ .011\\ 0.085 $\pm$ .008\\ 0.081 $\pm$ .011\end{tabular} & \begin{tabular}[c]{@{}c@{}}0.83 $\pm$ .01\\ 0.94 $\pm$ .01\\ 0.91 $\pm$ .00\\ 0.96 $\pm$ .01\end{tabular} \\ \hline
CAE            & \begin{tabular}[c]{@{}c@{}}4\\ 4\\ 32\\ 32\end{tabular} & \begin{tabular}[c]{@{}c@{}}4\\ 16\\ 16\\ 32\end{tabular} & \begin{tabular}[c]{@{}c@{}}601,452\\ 635,196\\ 2,610,120\\ 2,924,808\end{tabular} & \begin{tabular}[c]{@{}c@{}}0.0516 $\pm$ .001\\ 0.0409 $\pm$ .001\\  0.0290  $\pm$ .001\\ 0.0289 $\pm$ .002\end{tabular} & \begin{tabular}[c]{@{}c@{}}0.069 $\pm$ .019\\ 0.065 $\pm$ .018\\ 0.043 $\pm$ .012\\ 0.045 $\pm$ .011\end{tabular} & \begin{tabular}[c]{@{}c@{}}0.92 $\pm$ .01\\ 0.94 $\pm$ .01\\ 0.98 $\pm$ .01\\ 0.98 $\pm$ .01\end{tabular} \\ \hline
\multicolumn{7}{|c|}{\textbf{FMINST}}                                                                                             \\ \hline
VAE            & \begin{tabular}[c]{@{}c@{}}1\\ 1\\ 1\\ 1\end{tabular}   & \begin{tabular}[c]{@{}c@{}}8\\ 64\\ 8\\ 64\end{tabular}  & \begin{tabular}[c]{@{}c@{}}893,028\\ 938,088\\ 2,535,028\\ 2,625,088\end{tabular} & \begin{tabular}[c]{@{}c@{}}0.0575 $\pm$ .001\\ 0.0568 $\pm$ .003\\ 0.0474 $\pm$ .001\\ 0.0291 $\pm$ .006\end{tabular}   & \begin{tabular}[c]{@{}c@{}}0.016 $\pm$ .021\\ 0.029 $\pm$ .034\\ 0.014 $\pm$ .008\\ 0.021 $\pm$ .011\end{tabular} & \begin{tabular}[c]{@{}c@{}}0.80 $\pm$ .01\\ 0.95 $\pm$ .01\\ 0.92 $\pm$ .00\\ 0.92 $\pm$ .01\end{tabular} \\ \hline
CAE            & \begin{tabular}[c]{@{}c@{}}4\\ 4\\ 32\\ 32\end{tabular} & \begin{tabular}[c]{@{}c@{}}4\\ 16\\ 16\\ 64\end{tabular} & \begin{tabular}[c]{@{}c@{}}601,452\\ 635,196\\ 2,610,120\\ 3,554,184\end{tabular} & \begin{tabular}[c]{@{}c@{}}0.0409 $\pm$ .001\\ 0.0301 $\pm$ .001\\  0.0190  $\pm$ .001\\ 0.0177 $\pm$ .002\end{tabular} & \begin{tabular}[c]{@{}c@{}}0.010 $\pm$ .001\\ 0.010 $\pm$ .001\\ 0.016 $\pm$ .001\\ 0.007 $\pm$ .021\end{tabular} & \begin{tabular}[c]{@{}c@{}}0.90 $\pm$ .01\\ 0.90 $\pm$ .01\\ 0.97 $\pm$ .02\\ 0.97 $\pm$ .02\end{tabular} \\ \hline
\end{tabular}
}
\end{table}

\textbf{Further experiments.}
Due to space limitation, we leave many experimental results in the supplementary material. We give illustrative examples to show the limitation of VAE in covering double torus, the capability of CAE for complex topology, the effects of Lipschitz regularization, the mechanism of automatic chart removal, as well as the measurement of geodesics. We also apply CAE to MNIST and fashion MNIST, discuss reconstruction, visualize the manifolds, and compare quantitatively with VAE. These results collectively demonstrate the use of CAE and its effectiveness over plain auto-encoder and VAE.

\section{Conclusions}
We have proposed and investigated the use of chart based parameterization to model manifold structured data, through introducing multi-chart latent spaces along with chart transition functions. The multi-chart parameterization follows the mathematical definition of manifolds and allows one to use an appropriate dimension for latent encoding. We theoretically prove that multi-chart is necessary for preserving the data manifold topology and approximating it $\epsilon$-closely.  We prove a universal approximation theorem on the representation capability of CAE and provides estimations of training data size and network size. Numerically, we design geometric examples to analyze the behavior of the proposed model and illustrate its advantage over plain auto-encoders and VAEs. We also apply the model to real-life data sets (human motion, MNIST, and fashion MNIST) to illustrate the manifold structures under-explored by existing auto-encoders.

\bibliography{refrences1}
\bibliographystyle{unsrt}

\newpage

\appendix

\section{Proofs}
\label{sec:proofs}

\paragraph{Proof of Theorem \ref{thm:faithfulrep}}We prove that $\M$ is homeomorphic to $\Z$ by showing that $\E$ is a homeomorphism, i.e., $E$ is one-to-one, on to, and $\E^{-1}$ is also continuous.   First, $\E$ is onto by the definition. Second, Assume that there are $x_1\neq x_2\in\M$ such that $\E(x_1) = \E(x_2) =z$, then $\|\D\circ\E(x_1) - x_1\| \leq \epsilon < \tau(\M)$ and $\|\D\circ\E(x_2) - x_2\| \leq \epsilon < \tau(\M)$ as the auto-encoder is $\epsilon$-faithful representation of $\M$. This contradicts with the definition of the reach $\tau(\M)$. Thus, $\E$ is a one-to-one map. Third, since $\E$ is bijective, we can havea well-defined $\E^{-1}$. Note that $\E$ is a continuous map from a compact space $\M$ to a Hausdorff space $\Z$. Any closed subset $C\subset \M$ is compact, thus $\E(C)$ is compact which is also closed in the Hausdorff space $\Z$. Thus, $\E$ is a closed map which maps a closed set in $\M$  to a closed set in $\Z$. By passage to complements, this implies pre-images of any open set under $\E^{-1}$ will be also open. Thus, $\E^{-1}$ is continuous. 

Similarly, $\D$ is also one-to-one, otherwise, there exist $z_1\neq z_2$ satisfying $\D(z_1) = \D(z_2)$. Since $\E$ is a homeomorphism. We can find $x_1\neq x_2$ such that $\E(x_1) = z_1$ and $\E(x_2) = z_2$. From the definition of $\epsilon$-faithful representation. We have  $\|\D\circ\E(x_1) - x_1\| \leq \epsilon < \tau(\M)$ and $\|\D\circ\E(x_2) - x_2\| \leq \epsilon < \tau(\M)$ which contradicts with the definition of $\tau(\M)$. Thus, $\D$ is a homeomorphism from $\Z$ to $\D(\Z)$ based on the same argument as before. 

Last but not least, if $\M$ is not contractible, it will not be homeomorphic to a simply connected domain. This concludes the proof. 
\qed
\newline

\begin{definition}[Simplicial complex]
\label{def:simplex}
A $d$-simplex $S$ is a $d$-dimensional convex hull provided by convex combinations of  $d+1$ affinely independent vectors $\{v_i\}_{i=0}^d \subset \real^m$. In other words, $\displaystyle S = \left\{ \sum_{i=0}^d \xi_i v_i ~|~ \xi_i \geq 0, \sum_{i=0}^d \xi_i = 1 \right \}$. If we write $V = (v_1 - v_0,\cdots,v_d - v_0)$, then $V$ is invertible and $S = \left\{v_0 + V\beta ~|~ \beta \in\real^m, \beta \in\Delta \right\}$ where $\Delta = \left\{\beta\in\real^d~|~ \beta \geq 0, \vec{1}^\top \beta \leq 1 \right \}$ is a template simplex in $\real^d$. The convex hull of any subset of $\{v_i\}_{i=0}^d$ is called a {\it face} of $S$. A simplicial complex $\displaystyle \mathcal{S} = \bigcup_\alpha S_\alpha$ is composed with a set of simplices $\{S_\alpha\}$ satisfying: 1) every face of a simplex from $\S$ is also in $\S$; 2) the non-empty intersection of any two simplices $\displaystyle S _{1},S _{2}\in \S$ is a face of both $S_1$ and $S_2$. For any vertex $v \in \S$, we further write $\N^1(v) = \{a~|~ v\in S_\alpha\}$ and $\displaystyle \S^1(v) = \bigcup_{\alpha\in\N^1(v)} S_\alpha $, the first ring neighborhood of $v$.
\end{definition}

\begin{theorem}
\label{thm:simplexfun}
Given a $d$-dimensional simplicial complex $\mathcal{S}= \bigcup_\alpha \S_\alpha \subset\real^d$ with $n$ vertices $\{v_\ell\}_{\ell=1}^n$ where each  $S_\alpha$ is a $d$-dimensional simplex. Then, for any given piecewise linear function $f:\S \rightarrow \mathbb{R}$ satisfying $f$ linear on each simplex, there is a ReLU network representing $f$ exactly. Moreover, this neural network has $n(K(d+1) + 4(2K-1)) + n $ paremeraters and $\log_2 (K) + 2$ layers, where $K = \max_i |\N(v_i)|$ which is bounded above by the number of total $d$-simplices in $\S$.
\end{theorem}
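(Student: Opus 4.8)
The plan is to reduce the statement to the classical \emph{nodal (hat) basis} decomposition of a piecewise linear function and then realize each basis function by a small ReLU subnetwork. First I would observe that since $f$ is linear on every $d$-simplex and continuous, it is completely determined by its values at the $n$ vertices: writing $\psi_\ell$ for the hat function associated with $v_\ell$ (the unique continuous PL function with $\psi_\ell(v_k)=\delta_{\ell k}$ that is linear on each simplex and supported on the first ring $\S^1(v_\ell)$), one has the exact identity $f = \sum_{\ell=1}^n f(v_\ell)\,\psi_\ell$, because both sides are linear on each simplex and agree at every vertex. Thus it suffices to represent each $\psi_\ell$ exactly and then take a linear combination with the scalar weights $f(v_\ell)$; the latter contributes the trailing $n$ parameters and, being a pure linear read-out, adds no depth.

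Next I would represent a single hat function in \emph{max--min} form. On each simplex $S_\alpha\in\N^1(v_\ell)$ the restriction of $\psi_\ell$ is the barycentric coordinate of $v_\ell$, i.e. the affine map $\lambda_\ell^{(\alpha)}$ equal to $1$ at $v_\ell$ and $0$ at the remaining $d$ vertices of $S_\alpha$; extend each $\lambda_\ell^{(\alpha)}$ affinely to all of $\real^d$. The claim I would establish is
\[
  \psi_\ell(x) = \max\!\Big(0,\ \min_{\alpha\in\N^1(v_\ell)} \lambda_\ell^{(\alpha)}(x)\Big), \qquad x\in\S .
\]
The inner minimum reproduces $\psi_\ell$ on the star because $\psi_\ell$ is \emph{concave} there, so it coincides with the lower envelope of its affine pieces; the outer clamp $\max(0,\cdot)$ then zeroes the function on the simplices of $\S$ lying outside the star, where the envelope is nonpositive.

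I would then compile this formula into a ReLU network with the advertised budget. One input layer of $K$ units ($K=\max_i|\N^1(v_i)|$) computes the affine functions $\lambda_\ell^{(\alpha)}$, using $K(d+1)$ parameters. The minimum of these $K$ numbers is evaluated by a balanced binary tree of depth $\lceil\log_2 K\rceil$, each binary minimum written as $\min(a,b)=\tfrac12(a+b)-\tfrac12\mathrm{ReLU}(a-b)-\tfrac12\mathrm{ReLU}(b-a)$; together with the final $\mathrm{ReLU}$ implementing $\max(0,\cdot)$ this uses $2K-1$ rectifier units at $4$ parameters each, i.e. $4(2K-1)$ parameters, and $\log_2 K + 1$ additional layers. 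Running the $n$ hat subnetworks in parallel and fusing them through the weighted read-out yields $n\big(K(d+1)+4(2K-1)\big)+n$ parameters and $\log_2 K + 2$ layers; the bound $K\le\#\{d\text{-simplices}\}$ is immediate since $\N^1(v_i)$ is a subset of all simplices.

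The hard part will be the exactness claim for the max--min form, specifically verifying that the lower envelope $\min_\alpha \lambda_\ell^{(\alpha)}$ agrees with $\psi_\ell$ on each simplex of the star and stays nonpositive on the rest of $\S$. This hinges on the \emph{local convexity} of the triangulation around $v_\ell$ (equivalently, concavity of $\psi_\ell$ on its star): for a general abstract simplicial complex the affine extension $\lambda_\ell^{(\alpha)}$ evaluated at a neighboring vertex outside $S_\alpha$ need not be nonnegative, and the clean formula can break. For the complexes used in the chart construction --- Delaunay-type triangulations of a region of the tangent plane $\Z_\ell\subset\real^d$ --- the stars are convex and the argument goes through; I would state this convexity as the standing geometric hypothesis (or verify it directly from the tangent-space construction) and reduce the whole theorem to it. The remaining work is the routine but careful bookkeeping of weights and layers to land exactly on the stated constants.
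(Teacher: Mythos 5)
Your proposal is essentially the paper's own proof: the same nodal decomposition $f=\sum_\ell f(v_\ell)\,\psi_{v_\ell}$, the same clamped-minimum formula for the hat functions (your $\lambda^{(\alpha)}_\ell$ are exactly the paper's $1-\vec{1}^\top(W_i x+b_i)$), the same binary min-tree compilation via the Arora-et-al.\ lemma, and the same parameter/depth bookkeeping. The reason I am not stopping there is the caveat you flag at the end: it is not excessive caution but a genuine gap, and the paper's own treatment of it is incorrect. The paper argues that for $x\in S_i$ the barycentric vector $\beta_j$ of $x$ relative to a neighboring simplex $S_j$ has the same components as $\beta_i$ along shared directions and negative components along the rest, hence $\vec{1}^\top\beta_i\ge\vec{1}^\top\beta_j$; that agreement holds only for $x$ \emph{on the shared face}, not throughout $S_i$, and the inequality can fail. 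Concretely, let $v=(0,0)$, $u=(0,1)$, $w_1=(0.5,2)$, $w_2=(-1,0)$, and let $\S$ consist of the triangles $[v,u,w_1]$, $[v,u,w_2]$, $[u,w_1,w_2]$; this is precisely the Delaunay triangulation of these four points ($u$ is interior to their convex hull). The star of $v$ is $[v,u,w_1]\cup[v,u,w_2]$, with extended barycentric functions $\lambda^{(1)}_v(x,y)=1+2x-y$ and $\lambda^{(2)}_v(x,y)=1+x-y$. At $p=(0.4,1.7)\in[v,u,w_1]$ the true hat value is $\lambda^{(1)}_v(p)=0.1$, yet $\lambda^{(2)}_v(p)=-0.3$, so $\max\{0,\min_j\lambda^{(j)}_v(p)\}=0$. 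Taking $f=\psi_v$ shows the constructed network fails to represent $f$ exactly; so neither your argument nor the paper's establishes the theorem as stated, even for Delaunay complexes.

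Your proposed repair points in the right direction but needs two amendments. First, star convexity is exactly what yields the on-star identity: when $\S^1(v)$ is convex, the truncated hypograph $\{(x,t)~|~x\in\S^1(v),\,0\le t\le\psi_v(x)\}$ is the cone with apex $(v,1)$ over $\S^1(v)\times\{0\}$, hence convex, so $\psi_v$ is concave there and coincides with $\min_j\lambda^{(j)}_v$. But you also need the envelope to be nonpositive on $\S\setminus\S^1(v)$, which is a \emph{separate} condition; it does hold whenever the underlying polyhedron $|\S|$ is convex, because then for $x\in\S$ off the star the segment $[v,x]$ stays in $|\S|$, exits the star through an outer face at some point $y$, and affinity of $\lambda^{(j)}_v$ along the segment with $\lambda^{(j)}_v(v)=1$, $\lambda^{(j)}_v(y)=0$ forces $\lambda^{(j)}_v(x)\le 0$. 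Hull-covering Delaunay triangulations are fine on that second score, but --- and this is the more important correction --- your parenthetical claim that Delaunay-type triangulations automatically have convex stars is false: the counterexample above \emph{is} Delaunay. So star convexity must either be imposed as an explicit hypothesis (weakening the theorem) or derived in the setting of Theorem~\ref{thm:localchart} from quantitative regularity of the $\epsilon/2$-dense sample, not from the Delaunay property alone; that derivation is the missing piece in your proof and in the paper's.
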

\begin{proof}
We first show a hat function on $\S$ can be represented as a neural network. Given a vertex $v\in\{v_\ell\}$, let $\displaystyle \S^1(v) = \bigcup_{i\in\N^1(v)} S_i $ be the first ring neighborhood of $v$. Let $\Delta = \left\{\beta\in\real^d~|~ \beta \geq 0, \vec{1}^\top \beta \leq 1 \right \}$ be a template simplex in $\real^d$ and write $S_i = \left\{ v + V_i ~\beta ~|~ \beta\in\Delta \right\}$ where $V_i\in \real^{d\times d}$ is determined by the vertices of $S_i$ and invertible. Let us write $F_i = \left\{ v + V_i ~\beta ~|~ \beta \geq 0 , \vec{1}^\top\beta = 1 \right\} $ and $\displaystyle \bigcup_{i\in\N^1(v)} F_i$ forms the boundary of the first ring  $\S^1(v)$. We consider the following one-to-one correspondence between a point $x\in\real^d$ and its coordinates $\beta$ on the simplex $S_i$: 
\begin{equation}
    T_i:\real^d \rightarrow \real^d,  \qquad x \mapsto  \beta_i = T_i(x) =  W_i x + b_i ,  \qquad  \forall i\in\N^1(v)
\end{equation}
where $W_i = V_i^{-1}, b_i = -V_i^{-1} v$.  Meanwhile, $\beta_i$ provides a convenient way to check if $x\in S_i$, namely, $x\in S_i \Leftrightarrow \beta_i = T_i(x) \in\Delta$. We define the following function $\eta_{v}:\S\rightarrow \real $:
\begin{equation}
    \eta_{v}(x) = \max \left\{ \min_{i\in\N^1(v)} \left\{ 1 - \vec{1}^\top (W_i x + b_i) \right\}, 0 \right \}
\end{equation}
We claim that $\eta_{v}$ is a pyramid (hat) function supported on $\S(v)$, namely, $\eta_{v}$ is a piecewise linear function satisfying:
\begin{equation}
\label{eqn:hatfun1}
    \eta_{v}(x) = \left\{ \begin{array}{cc}
        1 & \text{if} \quad x = v \\
        1  - \vec{1}^\top (W_i x + b_i), & \quad\quad  \text{if} \quad x \in S_i \quad \text{for some}~ i\in\N^1(v) \\
        0 &   \text{if}  \quad x\in \S -  \bigcup_{i\in\N^1(v)} S_i
    \end{array}\right.
\end{equation}
First, it is easy to see that $T_i(v) = 0, i = 1,\cdots,K$ which yields $\eta_{v}(v) = 1$. Second, assume $x\in S_i$, then $\beta_i = T_i(x) \in \Delta$ and $1 -  
\vec{1}^\top \beta_i \geq 0$. Consider the coordinates $\beta_j = T_j(x)$ of $x$ on $S_j, j (\neq i)\in\N^1(v)$. For those components of $\beta_j$ along the intersection edges of $S_i$ and $S_j$, they are exactly the same as the corresponding components of $\beta_i$. For those components of $\beta_j$ associated with directions along the non-intersection edges, they are negative. Thus, we have $\vec{1}^\top \beta_i \geq \vec{1}^\top \beta_j, \forall j\in\N^1(v)$. This implies $0\leq 1 -  
\vec{1}^\top \beta_i \leq 1 - \vec{1}^\top \beta_j, \forall j\in\N^1(v)$. Therefore, $\eta_{v}(x) = 1 - \vec{1}^\top(W_i x + b_i)$. In addition, it is straightforward to check $\eta_{v}(x) = 0, \forall x\in F_i$. Third, if $x\in  \left\{ v + V_i ~\beta ~|~ \beta \geq 0, \vec{1}^\top\beta> 1  \right\}$, we have $1 -  \vec{1}^\top \beta_i <0$, thus $\eta_{v}(x) = \max\{1 -  \vec{1}^\top \beta_i, 0\} = 0$. 
It is easy to see that 
\begin{equation}
\label{eqn:min}
\displaystyle \min\{ a, b\} = \frac{1}{2} (\mathrm{ReLu}(a+b) - \mathrm{ReLu}(a-b) - \mathrm{ReLu}(-a+b)  - \mathrm{ ReLu}(-a-b)),
\end{equation}
which means that $\min \{a, b\}$ can be represented as a 2-layer ReLU network. Based on equations \eqref{eqn:hatfun1} and \eqref{eqn:min} inductively, it straightforward to show that $\eta_{v}$ can be represented as a DNN with at most $\log_2(|\N^1(v)|) + 1$ layers and at most $ (|\N^1(v)|(d +1)+ 4(2(|\N^1(v)|-1)$ parameters according to Lemma~\ref{lemma:min} from~\cite{arora2016understanding}. Note that we can write $\displaystyle f(x) = \sum_{\ell} f(v_\ell) \eta_{v_\ell}(x)$; therefore, $f$ can be written as a DNN with at most $\log_2(K) + 2$ layers and at most $ n(K(d+1) + 4(2K-1)) + n $ parameters, where $K = \max_i |\N^1(v_i)|$ which is bounded above by the number of total $d$-simplices in $\S$. This concludes the proof.
\end{proof}

\begin{remark}
We remark that our construction is different from the construction used in~\cite{arora2016understanding}, which does not have estimation the number of parameters since it relies on a hinging hyperplane theorem in~\cite{wang2005generalization}.
\end{remark}

\begin{lemma}[Lemma D.3 in~\cite{arora2016understanding}]Let $f_1,\cdots,f_m$ be functions that can each be represented by ReLu DNNs with $k_i+1$ layers and $s_i$ parameters, $i = 1,\cdots, m$. Then, the function $g(x) = \min\{f_1(x),\cdots,f_m(x)\}$ can be represented by a ReLu DNN of at most $\max\{k_1,\cdots,k_m\} + \log_2(m) + 1$ layers and at most $s_1+\cdots+s_m + 4(2m - 1)$ parameters. 
\label{lemma:min}
\end{lemma}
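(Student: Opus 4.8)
The plan is to build $g = \min\{f_1,\dots,f_m\}$ as a balanced binary tree of two-input $\min$ gadgets placed on top of a single network that evaluates all $f_i$ in parallel. The one building block I need is already recorded in the proof of Theorem~\ref{thm:simplexfun}: by~\eqref{eqn:min}, $\min\{a,b\} = \tfrac12\big(\mathrm{ReLU}(a+b) - \mathrm{ReLU}(a-b) - \mathrm{ReLU}(-a+b) - \mathrm{ReLU}(-a-b)\big)$ is a ReLU network with a single hidden layer of four units and one affine output, i.e.\ two affine layers and a bounded number of parameters.

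First I would stack $f_1,\dots,f_m$ side by side into one feedforward network computing the vector $(f_1(x),\dots,f_m(x))$. Since the $f_i$ may have unequal depths $k_i+1$, I would pad each shorter network up to the common depth $\max_i k_i + 1$ by inserting value-preserving layers, carrying a signed scalar unchanged through a ReLU layer via $y = \mathrm{ReLU}(y) - \mathrm{ReLU}(-y)$. This parallel block has $\max_i k_i$ hidden ReLU layers and $s_1 + \cdots + s_m$ parameters, modulo the padding overhead.

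Next I would feed the $m$ outputs into a balanced binary tree of the gadget~\eqref{eqn:min} of depth $\lceil \log_2 m\rceil$; a tree with $m$ leaves has $m-1$ internal $\min$ nodes and $2m-1$ nodes in total. Associativity and commutativity of $\min$ make correctness immediate: the root produces exactly $\min\{f_1(x),\dots,f_m(x)\} = g(x)$. For the depth, observe that every affine output layer (of the parallel block, or of one tree level) is immediately followed by the affine input layer of the next $\min$ gadget with no intervening ReLU, so all these adjacent affine maps merge; the network then has $\max_i k_i + \lceil \log_2 m\rceil$ hidden ReLU layers and hence $\max_i k_i + \log_2 m + 1$ affine layers, as claimed. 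The parameter count is $s_1 + \cdots + s_m$ from the parallel block plus a contribution of the form $4(2m-1)$ from the tree, obtained by charging the four-unit gadget~\eqref{eqn:min} across the $2m-1$ tree nodes together with the merged interface maps.

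The tree itself is routine; the main obstacle is the accounting. One must check that depth-padding the unequal-depth sub-networks and merging the affine maps across every composition boundary yield exactly the stated constants — the single $+1$ in the depth and the precise $4(2m-1)$ in the parameter bound — rather than a looser estimate, since a naive treatment would keep each gadget at its nominal two layers and double-count the shared affine maps. Equivalently, one can run the same construction as an induction on $m$ and verify that the resulting recurrences for depth and parameter count close to the claimed bounds.
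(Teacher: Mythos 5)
The paper itself contains no proof of this lemma: it is quoted verbatim as Lemma~D.3 of Arora et al.~\cite{arora2016understanding} and used as a black box, so the only meaningful comparison is with that source's proof. Your construction --- stacking the $f_i$ in parallel and folding their outputs through a balanced binary tree of the four-unit pairwise-$\min$ gadget~\eqref{eqn:min}, charging $4$ units to each of the $2m-1$ tree nodes --- is exactly that proof (phrased there as an induction on $m$, which is the same tree), and even the depth-padding/affine-merging bookkeeping you flag as the remaining obstacle is glossed over in just the same way in the original.
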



\paragraph{Proof of Theorem \ref{thm:localchart}}
We begin with constructing a neural network on a given chart $\M_r(p) =  \{x\in\M~|~ d(p,x) \leq \gamma\}$. Let $\T_{p,r}\M = \{v\in\T_p\M~|~ \|v\| \leq r\}$. Since $\M$ is compact, then the exponential map $\exp_p (v):\T_{p,r}\M \rightarrow \M_r(p), v\mapsto \gamma_v(1) $ is one-to-one and onto where $\gamma_v(t)$ is a geodesic curve satisfying $\gamma_v(0) = p, \dot{\gamma}_v(0) = v$. We write the inverse of $\exp_v$ as the logarithmic map $\log_p(x): \M_r \rightarrow \T_{p,r}\M $.  In the rest of the proof, we will construct an encoder $\E$ to approximate $\log_p$ and a decoder $\D$ to approximate $\exp_p$ based on the training set $X= \{x_i\}_{i=1}^n$. We define $\{z_i\}_{i=1}^n = \{\log_{p} (x_i)\}_{i=1}^n \in \Z$ as the corresponding latent variables of $X$. Note that $\{z_i\}_{i=1}^n$ are sampled on a bounded domain $\T_{p,r}\M$; thus there exists a simplicial complex for $\{z_i\}_{i=1}^n$ through a Delanuay triangulation $\S = \bigcup_{\alpha=1}^T S_\alpha$ with $T = O (n^{\lceil d/2\rceil })$~\cite{bern1995dihedral}. Here, each $S_\alpha$ is a $d$-dimensional simplex whose vertices are $d+1$ points from $\{z_i\}_{i=1}^n$. From the one-to-one correspondence between $\{z_i\}_{i=1}^n$ and $\{x_i\}_{i=1}^n$, we can have a $d$-simplex $\bar{S}_\alpha$ by replacing vertices in $S_\alpha$ as the corresponding $x_i\in X$. This provides a simplicial complex $\bar{\S} = \bigcup_{\alpha=1}^T \bar{S}_\alpha$. Note that each vertex of $\bar{\S}$ is on $\M_r$; therefore, $\bar{\S}$ is a simplicial complex approximation of $\M_r(p)$. We define $\Z = \S = \bigcup_{\alpha=1}^T S_\alpha$ which is essentially a $d$-dimensional ball in $\real^d$. It is also  straightforward to define a simplicial map,
\begin{equation}
 F: \Z = \S\rightarrow \bar{\S}\subset\real^m, \qquad F(z) = \sum_{i=0}^d\xi_i x_{\alpha_i} \quad \text{for} \quad z = \sum_{i=0}^d\xi_i z_{\alpha_i}\in S_\alpha .
\end{equation} 
Here, $F$ maps $z_i$ to $x_i$ and piecewise linearly spend the rest of the map. According to Theorem~\ref{thm:simplexfun}, each component of $F$ can be represented as a neural network. Therefore, $F$ can be represented by a neural network $\D$ with at most $\displaystyle mn(n^{\lceil d/2\rceil }(d+1) + 4(2n^{\lceil d/2\rceil }-1)) + mn = O(mdn^{1+d/2})$ parameraters and $\displaystyle \lceil d/2\rceil \log_2 (n) + 2 = O(\frac{d}{2} \log_2(n))$ layers.\footnote{We remark that this estimation is not sharp since we overestimate $\max_i |\N^1(z_i)|$ using the total number of simplices. We conjecture that this number $\max_i |\N^1(z_i)|$ should constantly depend only on $d$. This is true for Delaunay triangulation to points distributed according to a Poisson process in $\real^d$~\cite{dwyer1991higher}. If this conjecture is true, then the number of parameters has order $O(mdn)$ which will improve the parameter size as $O(\epsilon^{-d})$. }

Next, we construct a decoder $\D:\M_r(p) \rightarrow \Z$. We first construct a projection from $\M_r(p)$ to its simplicial approximation $\bar{\S}$. We write $\{x_{\alpha_0}, \cdots, x_{\alpha_d}\} \subset X$ being $d+1$ vertices in a simplex $\bar{S}_\alpha$. For convenience, we write $V_\alpha = \{x_{\alpha_0} + X_\alpha \beta~|~ \beta\in\real^d\}$ and each $\bar{S}_\alpha = \{x_{\alpha_0} + X_\alpha \beta~|~ \beta\in\Delta\}$ where $\Delta$ is the template $d$-simplex used in Definition~\ref{def:simplex}. Note that $X_\alpha = (x_{\alpha_1}-x_{\alpha_0}, \cdots, x_{\alpha_d}-x_{\alpha_0})\in\real^{m\times d}$ is a full rank matrix.
We define the projection operator:
\begin{equation}
\text{Proj}_\alpha : \M_r(p) \rightarrow V_\alpha , \qquad x \mapsto \text{Proj}_\alpha(x) = X_\alpha X_\alpha^{\dagger} (x - x_{\alpha_0}) + x_{\alpha_0}
\end{equation}
where  $X_\alpha^{\dagger} = (X_\alpha^\top X_\alpha)^{-1} X_\alpha^\top$ is the Moore-Penrose pseudo-inverse of $X_\alpha$. It is clear to see that $X_\alpha^{\dagger} (x - x_{\alpha_0})$ provides coordinates of $\text{Proj}_{\alpha}(x)$ in the simplex $\bar{S}_\alpha$. Similar as the construction used in the proof of Theorem~\ref{thm:simplexfun}, for each $x_i \in X$ surrounded by $\{\bar{S}_\alpha\}_{\alpha\in\N^1(x_i)}$, we construct the function
\begin{equation}
    \eta_{x_i}(x) =  \chi(\|x - x_i\|^2) \max \left\{ \min_{ \alpha \in\N^1(x_i)} \left\{ 1 - \vec{1}^\top X_\alpha^{\dagger} (x - x_{\alpha_0})  \right\}, 0 \right \}
\end{equation}
where 
$\chi$ serves as an approximation of a indicator function with $0<\mu \ll \epsilon$ and
\begin{equation}\displaystyle \chi(t) = \left\{\begin{array}{cc} 1, & \text{if} \quad  0\leq t\leq \epsilon^2 + \mu \\
1 + \frac{1}{\mu} (\epsilon^2 + \mu - t), & \text{if} \quad  \epsilon^2 + \mu  \leq t \leq \epsilon^2 + 2\mu \\
 0, & \text{if} \quad  t \geq \epsilon^2 +  2\mu .
\end{array} \right. \end{equation}
Since $X$ is $\epsilon/2$-dense, we have $\max_{\alpha\in\N^1(x_i)}\|x_i - x_\alpha\| \leq \epsilon$. Thus the indicator function $\chi$ restricts $x$ in the first ring of $x_i$. In addition, for any $x$ in the support of $\chi(\|x - x_i\|^2)$, there exists a unique simplex $\bar{S}_\alpha$ in the first ring of $x_i$ such that $X_\alpha^{\dagger} (x - x_{\alpha_0})\in\Delta$; otherwise this contradicts with $\epsilon < \tau$. Using a similar argument in Theorem~\ref{thm:simplexfun}, one can also show that 
\begin{equation}
\label{eqn:hatfun2}
    \eta_{x_i}(x) = \left\{ \begin{array}{cc}
        1 & \text{if} \quad x = x_i \\
        1 - \vec{1}^\top X_\alpha^{\dagger} (x - x_{\alpha_0}) , & \quad \text{if} \quad x\in \bar{S}_\alpha \quad \text{for some}~ \alpha\in\N^1(x_i) \\
        0 &   \text{otherwise}.
    \end{array}\right.
\end{equation}
It is straightforward to check that $\chi(t) =  \frac{1}{\mu}  \text{ReLu}(-t + \epsilon^2 + 2\mu) - \frac{1}{\mu}  \text{ReLu}(-t + \epsilon^2 + \mu)$. Therefore, $\eta_{x_i}$ can be represented as the neural network with at most $\log_2(|\N^1(v)|) + 1$ layers and at most $ (|\N^1(v)|(m +1)+ 4(2(|\N^1(v)|-1)$ parameters. Note that this neural network is not a feed forward ReLU network as we request the network compute multiplication for $\|x - x_i\|^2$ and multiplication between $\chi$ and output of ReLu.\footnote{We remark that multiplication can also be approximated by ReLU network according to Propositions 2 and 3 in \cite{yarotsky2017error}, where the author showed that multiplications can be approximated by a ReLU network with any error $\epsilon$ with layers and number of parameters bound above by $O(C\log(1/\epsilon)$). We can adjust the buffer zone by controlling the parameter $\mu$ to accommodate this approximation error. } We define the encoder $\E(x) = F^{-1}\circ\text{Proj}_\alpha(x)$. Here $\bar{S}_\alpha$ is chosen as the closest simplex to $x$ and $F^{-1}(x) = \sum_{i=0}^d\xi_i z_{\alpha_i} \quad \text{for} \quad x = \sum_{i=0}^d\xi_i x_{\alpha_i}\in \bar{S}_\alpha$. Similar as approximation of $F$, we can use a neural network to represent $\E$ (as $d$ functions) with at most $ O(mdn^{1+d/2})$ parameters and $\displaystyle \lceil d/2\rceil \log_2 (n) + 2 = O(\frac{d}{2} \log_2(n))$ layers.

Now, we estimate the difference between $x$ and $\D\circ\E(x)$. From the above construction of $\E$ and $\D$, we have $\D\circ\E(x) = \text{Proj}_\alpha(x)$ where $\bar{S}_\alpha$ is chosen as the closest simplex to $x$.  Since $X$, the vertices of $\bar{\S}$, is an $\epsilon/2$-dense sample, this implies $\text{diam}(\bar{S}_\alpha) = \max \Big\{ \|x -y\|~|~ x,y\in \bar{S}_\alpha \Big\} \leq \epsilon$. Since the reach of $\M$ is $\tau$, the worst scenario becomes to compute the approximation error between a segment connecting two $\epsilon$-away points on a radius $\tau$ circle to itself. This error is $\tau - \sqrt{\tau^2 - (\epsilon/2)^2} \leq \epsilon$ since $\epsilon < \tau/2$.

As a byproduct of this constructive proof, we have $\D\circ\E(x_i) = x_i, i=1,\cdots,n$. This means the constructed $\E$ and $\D$ are global minimizers of the training loss $\displaystyle \frac{1}{n}\sum_{i=1}^n \|\D\circ\E(x_i) - x_i\|^2$. 

This concludes the proof.
\qed

\paragraph{Proof of Theorem \ref{thm:UMA}}
We first apply proposition 3.2 from Niyogi-Smale-Weinberger~\cite{niyogi2008finding} to obtain an estimation of the cardinality of the training set $X$ on $\M$ satisfying that $X$ is $\epsilon/2$-dense on $\M$ with probability $1-\nu$. We reiterate this proposition here:
\begin{prop}\label{Homology} {Niyogi-Smale-Weinberger}~\cite{niyogi2008finding}
Let $\M$ be a $d$-dimensional compact manifold with reach $\tau$. Let $ X = \{x_i\}_{i=1}^n$ be a set of $n$ points drawn in i.i.d. fashion according to the uniform probability measure on $\M$. Then with probability greater than $1-\nu$, we have that $X$ is $\epsilon/2$-dense ($\epsilon < \tau/2$) in $\M$, provided that
\begin{equation}\label{eqn:npts}
    n > \beta_1 \Big(\log (\beta_2) + \log (\dfrac{1}{\nu}) \Big),
\end{equation}
where $ \beta_1 = \dfrac{vol(\M)}{\cos^d\Big(\arcsin(\dfrac{\epsilon }{8 \tau} )\Big)vol(B^d_{\epsilon/4})}$ and $\beta_2 = \dfrac{vol(\M)}{\cos^d\Big(\arcsin (\dfrac{\epsilon }{16 \tau} )vol(B^d_{\epsilon/8})}$. Here $vol(B^d_\delta)$ denotes the volume of the standard $d$-dimensional ball of radius $\delta$. 
\end{prop}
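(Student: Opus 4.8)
The plan is to reduce the event ``$X$ is $\epsilon/2$-dense'' to a finite family of capture events indexed by a net on $\M$, and then to lower-bound each capture probability through a volume estimate for small metric balls. I would first fix a maximal $(\epsilon/4)$-separated set $A=\{a_1,\dots,a_\ell\}\subset\M$, meaning the $a_k$ are pairwise at Euclidean distance $\ge\epsilon/4$. By maximality the balls $B(a_k,\epsilon/4)$ cover $\M$, so if every cell $B(a_k,\epsilon/4)\cap\M$ contains at least one sample, then for each $p\in\M$ there is an $a_k$ with $\|p-a_k\|\le\epsilon/4$ and a sample $x_i$ with $\|a_k-x_i\|\le\epsilon/4$, whence $\|p-x_i\|\le\epsilon/2$. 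Thus $\epsilon/2$-density follows once I (i) bound the net cardinality $\ell$ from above and (ii) lower-bound the probability that a fixed cell $B(a_k,\epsilon/4)\cap\M$ captures a sample; both reduce to a single geometric ingredient.

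That ingredient, and the main obstacle, is a lower bound on the volume of the portion of $\M$ inside a small Euclidean ball. I would prove that for any $p\in\M$ and $0<r<\tau/2$,
\[
\mathrm{vol}\big(B(p,r)\cap\M\big)\ \ge\ \mathrm{vol}\big(B_\M(p,r)\big)\ \ge\ \cos^{d}\!\Big(\arcsin\tfrac{r}{2\tau}\Big)\,\mathrm{vol}(B^d_r),
\]
where $B_\M(p,r)$ is the geodesic ball; the first inequality holds because geodesic distance dominates Euclidean distance. For the second, the reach controls tangent-plane tilt: for $y$ at geodesic distance $\le r$ from $p$ the angle $\theta$ between $T_y\M$ and $T_p\M$ obeys $\sin\theta\le r/(2\tau)$, a standard consequence of $\tau$ bounding the second fundamental form (and $r<\tau/2$ keeps $r$ below the injectivity radius, so $\exp_p$ is a diffeomorphism on the relevant disk). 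Orthogonally projecting $B_\M(p,r)$ onto $T_p\M$ is then volume-nonincreasing while its image contains a Euclidean $d$-ball of radius $r\cos\theta$, giving $\mathrm{vol}(B_\M(p,r))\ge\cos^d\theta\,\mathrm{vol}(B^d_r)$; since $\sin\theta\le r/(2\tau)$ yields $\cos\theta\ge\cos(\arcsin(r/(2\tau)))$, the bound follows. Writing $V_{\min}(r):=\cos^{d}(\arcsin(r/(2\tau)))\,\mathrm{vol}(B^d_r)$, I note that $V_{\min}(\epsilon/4)=\mathrm{vol}(\M)/\beta_1$ and $V_{\min}(\epsilon/8)=\mathrm{vol}(\M)/\beta_2$, which is precisely where the constants $\beta_1,\beta_2$ in the statement originate.

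The remaining steps are then elementary. Separation by $\epsilon/4$ makes the half-balls $B(a_k,\epsilon/8)\cap\M$ pairwise disjoint, so $\ell\,V_{\min}(\epsilon/8)\le\sum_k\mathrm{vol}(B(a_k,\epsilon/8)\cap\M)\le\mathrm{vol}(\M)$ and hence $\ell\le\mathrm{vol}(\M)/V_{\min}(\epsilon/8)=\beta_2$. A single uniform sample lands in a fixed cell with probability $\mathrm{vol}(B(a_k,\epsilon/4)\cap\M)/\mathrm{vol}(\M)\ge V_{\min}(\epsilon/4)/\mathrm{vol}(\M)=1/\beta_1$, so the chance that all $n$ i.i.d.\ samples miss that cell is at most $(1-1/\beta_1)^n\le e^{-n/\beta_1}$. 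A union bound over the at most $\beta_2$ cells bounds the probability that $X$ fails to be $\epsilon/2$-dense by $\beta_2 e^{-n/\beta_1}$. Requiring $\beta_2 e^{-n/\beta_1}<\nu$ and taking logarithms rearranges to $n>\beta_1(\log\beta_2+\log(1/\nu))$, which is exactly hypothesis~\eqref{eqn:npts}. Everything past the volume lemma is counting together with an elementary tail estimate, so the one genuinely delicate point is the projection/volume argument driven by the reach.
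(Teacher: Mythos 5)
Your proposal is correct, but note that the paper itself gives no proof of this proposition: it is imported verbatim (as Proposition 3.2) from Niyogi--Smale--Weinberger~\cite{niyogi2008finding} and invoked as a black box in the proof of Theorem~\ref{thm:UMA}. What you have written is essentially a reconstruction of the original NSW argument: the same reduction to capture events over a maximal $\epsilon/4$-separated net, the same volume lower bound $\mathrm{vol}(B(p,r)\cap\M)\ge \cos^d\big(\arcsin(r/2\tau)\big)\,\mathrm{vol}(B^d_r)$ (their Lemma 5.3), the same packing bound $\ell\le\beta_2$ via disjoint $\epsilon/8$-balls, and the same $(1-1/\beta_1)^n\le e^{-n/\beta_1}$ tail plus union bound; your constants match theirs exactly.

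One caveat on the single delicate step, which you yourself flag. Your justification of the angle bound $\sin\theta\le r/(2\tau)$ via tangent-plane tilt is not quite right: the second-fundamental-form bound $1/\tau$ gives, along a unit-speed geodesic of length $r$, only $\sin(\theta/2)\le r/(2\tau)$, i.e.\ a tilt of up to $2\arcsin\big(r/(2\tau)\big)$, weaker than claimed. The bound with the stated constant is Federer's chord--tangent estimate: for $q\in\M$ with $\|q-p\|=s$, the reach gives $\mathrm{dist}\big(q-p,\,T_p\M\big)\le s^2/(2\tau)$, so the \emph{chord} makes angle at most $\arcsin\big(s/(2\tau)\big)$ with $T_p\M$. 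Accordingly, you should run the projection argument on the Euclidean slice $B(p,r)\cap\M$ (the connected component containing $p$) rather than on the geodesic ball: points on the relative boundary of the slice satisfy $\|q-p\|=r$ exactly and hence project to distance at least $r\cos\big(\arcsin(r/(2\tau))\big)$ from $p$, so the image of the (locally diffeomorphic, since tilt stays below $\pi/2$) projection contains the disk $B^d_{r\cos\theta}$, giving precisely $\beta_1,\beta_2$. With the geodesic ball, boundary chords have length $s<r$ and you lose a small but real factor $2\tau\sin\big(r/(2\tau)\big)/r$ in the radius. Since your net cells are Euclidean balls anyway, this substitution is cosmetic and the rest of your counting goes through verbatim.
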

Note that $\displaystyle vol(B^d_\delta) = \frac{\pi^{d/2}\delta^d}{\Gamma(1+ d/2)}$ and $\cos(\arcsin(\delta)) = \sqrt{1 - \delta^2}$. Plugging them in the above proposition yields $ \displaystyle \beta_1 = C~ \Big(\frac{\epsilon}{4}\Big)^{-d}\Big(1 - (\frac{\epsilon}{8\tau})^2\Big)^{-d/2} $ and $\displaystyle \beta_2 = C~\Big(\frac{\epsilon}{8}\Big)^{-d}\Big(1 - (\frac{\epsilon}{16\tau})^2\Big)^{-d/2}  $. It is clear that $n = O\Big(-d\epsilon^{-d}\log\epsilon\Big)$. 

Since $\M$ is a compact manifold, we cover $\M$ using $L$ geodesic balls as we used in Theorem \ref{thm:localchart}, i.e. $\M = \bigcup_{\ell=1}^L \M_r(p_\ell)$. One can control the radius such that $L \geq d$. We write restriction of training data set $X$ on each of $\M_\ell$ as $X_\ell = \M_r(p_\ell)\cap X$. Since $X$ is uniformly sampled on $\M$, thus $|X_\ell | = O(n/L)$. Based on the local Theorem \ref{thm:localchart}, each of $\M_r(p_\ell)$ has an $\epsilon$-faithful representation $(\Z_\ell, \E_\ell, \D_\ell)$ where each $\Z_\ell$ is a radius-$r$ standard ball in $\real^d$, both $\E_\ell$ and $\D_\ell$ have  $O(md (n/L)^{1+d/2}) = O(md \epsilon^{-d - d^2/2}(-\log^{1+d/2}\epsilon))$ parameters and $\displaystyle  O(\frac{d}{2}\log_2(n/L)) = O(-d^2\log_2\epsilon/2)$ layers.

To construct a latent space for $\M$, we consider a disjoint union $\bar{\Z} = \bigsqcup_\ell \Z_\ell$ and glue $\Z_\ell$ through an equivalence relation. Given $z_{\ell_1}\in\Z_{\ell_1}$ and $ z_{\ell_2} \in \Z_{\ell_2}$, we define $z_{\ell_1}\sim z_{\ell_2}$ if $\E_{\ell_1}^{-1} (z_{\ell_1}) = \E_{\ell_2}^{-1} (z_{\ell_2})$, then the latent space is defined as $\Z = \bar{\Z}/\sim$. This construction guarantees that $\Z$ is homeomorhphic to $\M$ which is compatible with the result from Theorem~\ref{thm:faithfulrep}. According the construction of $\D_\ell$, it is clear to see that if $z_{\ell_1}\sim z_{\ell_2}$, then $\D_{\ell_1}(z_{\ell_1}) = \D_{\ell_2}(z_{\ell_2})$. Therefore, the collection of encoders and decoders are well-defined. Since each of $(\Z_\ell, \E_\ell, \D_\ell)$ is $\epsilon$-faithful representation, $(\Z, \{\E_\ell\}, \{\D_\ell\})$ is also $\epsilon$-faithful. Overall,  encoders and decoders have $O(Lmd \epsilon^{-d - d^2/2}(-\log^{1+d/2}\epsilon))$ parameters and $\displaystyle   O(-d^2\log_2\epsilon/2)$ layers. This concludes the proof. 
\qed


\section{Additional Experiment Results}
\label{app:Experiments}
\subsection{Illustrative Examples}

\paragraph{VAEs Do Not Generalize for Double Torus.}
Figure~\ref{fig:vae} shows an experiment of VAEs with increasingly more parameters on data sampled from a double torus. The latent space dimension is set at two, the intrinsic dimension of the object. One sees that increasing the number of parameters in a VAE alone (without increasing the latent dimension) does not simultaneously produce good reconstruction and generalize. A latent space with a small dimension does not cover the entire manifold and a model with too many parameters overfits (the generated points may be far from the manifold). As a comparison, our CAE model using two-dimensional latent space can cover the data manifold very well (See Figure \ref{fig:Eight}).

\begin{figure}[h]
  \centering
  \includegraphics[width=.7\linewidth]{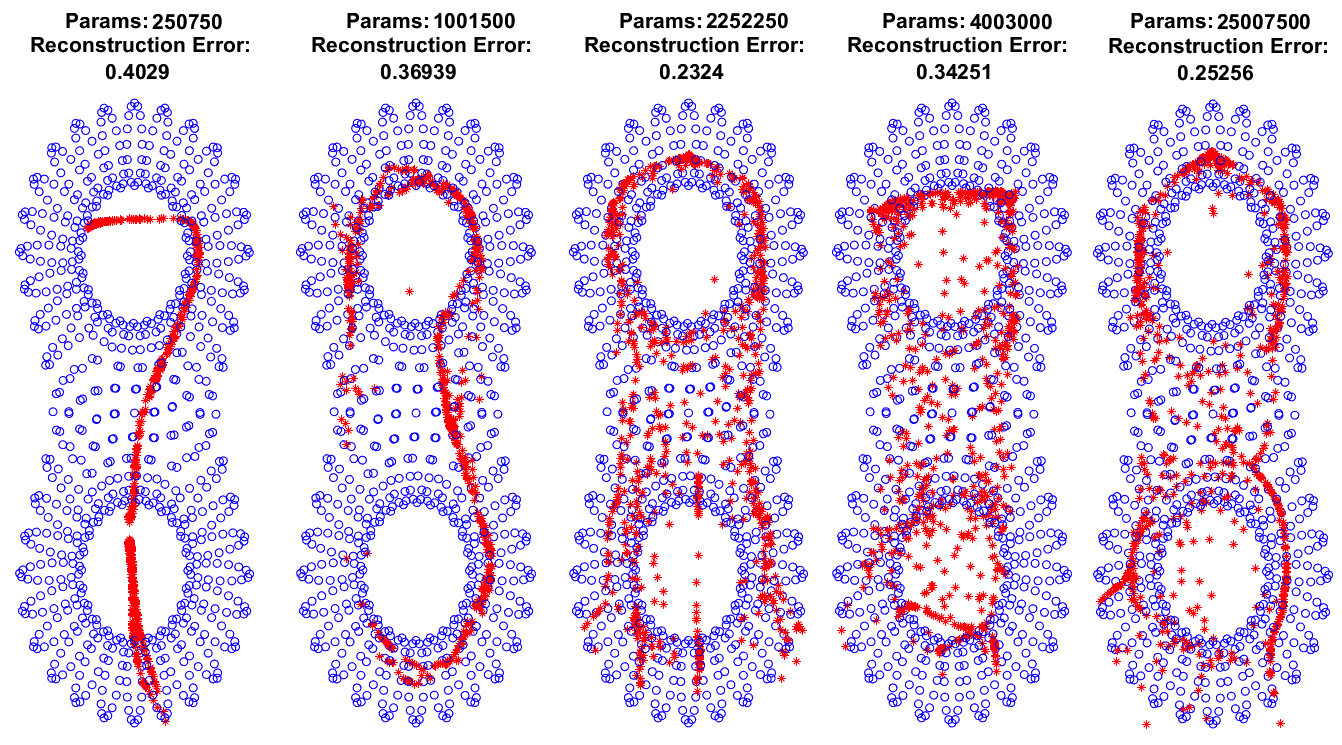}
  \caption{Increasingly overparametized VAEs with 2-dimensional flat latent space for data sampled from a double torus. Blue: training data. Red: generated data sampled from the latent space.}
  \label{fig:vae}
\end{figure}

\paragraph{Complex Topology.}
Beyond circles and spheres, CAE can handle increasingly more complex manifolds. Figure~\ref{fig:Gen3Results} shows a genus-3 manifold example, which is the surface of a pyramid with three holes. We use ten 2-dimensional charts to cover the entire manifold. Figure \ref{fig:Gen3Results} illustrates that new points generated by CAE stay close to the data manifold.
\begin{figure}[ht]
  \centering
  \includegraphics[width=0.6\linewidth]{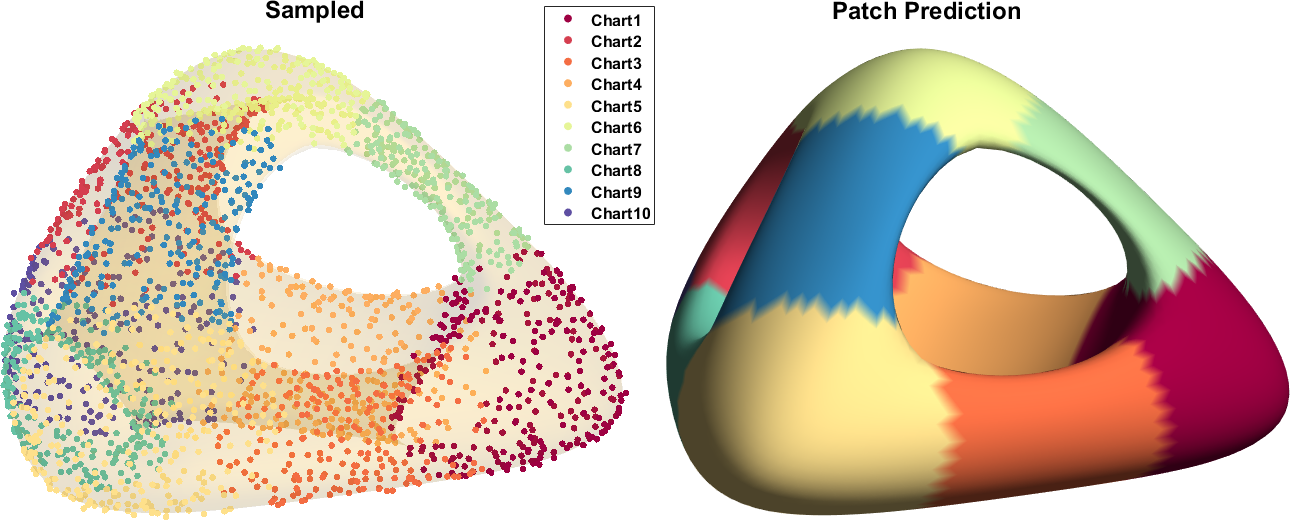}
  \vskip -0.1in
  \caption{Left: Points sampled from high probability regions. Right: Charts after taking max. }
  \label{fig:Gen3Results}
\end{figure}

\paragraph{Effects of Lipschitz Regularization.}
\begin{figure}[ht]
\centering 
  \includegraphics[width=.6\linewidth]{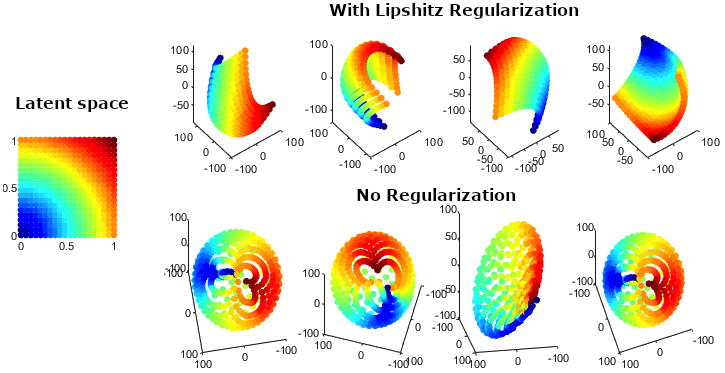}
  \caption{Left: Chart latent space. Top: Model with Lipschitz regularization. Bottom: Model without Lipschitz regularization.}
  \label{fig:SphereEncoders}
\end{figure}
In Section~\ref{sec:Reg} we mentioned the use of Lipschitz regularization as an important tool to stabilize training and encourage reasonable chart size. In Figure~\ref{fig:SphereEncoders} we show the result of auto-encoding a sphere in $\mathbb{R}^3$ using a 2-dimensional charted latent space. The top row shows the charts trained with Lipschitz regularization and bottom without regularization. One clearly sees that with Lipschitz regularization the charts are well localized, whereas without such regularization each chart spreads over the sphere, but none covers the entire sphere well.

\paragraph{Automatic Chart Removal.}
As discussed in Section~\ref{sec:Arch}, it is hard to know a priori the sufficient number of charts necessary to cover an unknown manifold. Hence, we propose over-specifying a number and relying on regularization to eliminate unnecessary charts. In Figure~\ref{fig:kill} we illustrate such an example. 
Pretrainining results in four charts but subsequent training removes two automatically.
\begin{figure}[ht]
\centering
  \includegraphics[width=.5\linewidth]{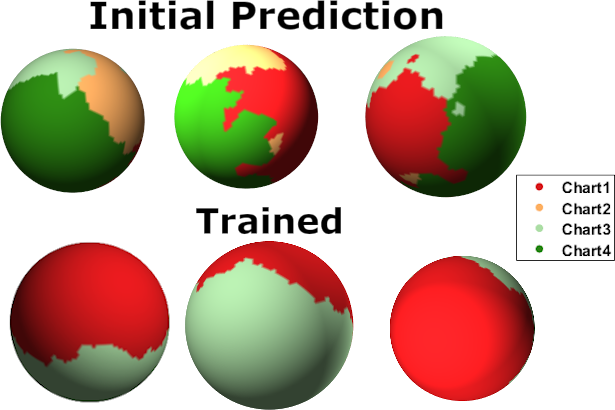}
  \caption{Top: Pre-trained charts. Bottom: Final charts after training. }
  \label{fig:kill}
\end{figure}

\paragraph{Measuring Geodesics.}
One advantage of CAE compared with plain auto-encoders and VAEs is that it is able to measure geometric properties of the manifold. In Figure~\ref{fig:geo}, we illustrate measuring geodesics as an example. To measure the geodesic distance of two points, we encode each point, connect them in the latent space, and sample points along the connection path. We then approximate the geodesic distance by summing the Euclidean distances for every pair of adjacent points. By increasing the number of sampling points, we can improve the approximation quality. The figure shows a few geodesic curves and their approximation error, decreasing with denser sampling.

\begin{figure}[ht]
  \centering
  \includegraphics[width=.6\linewidth]{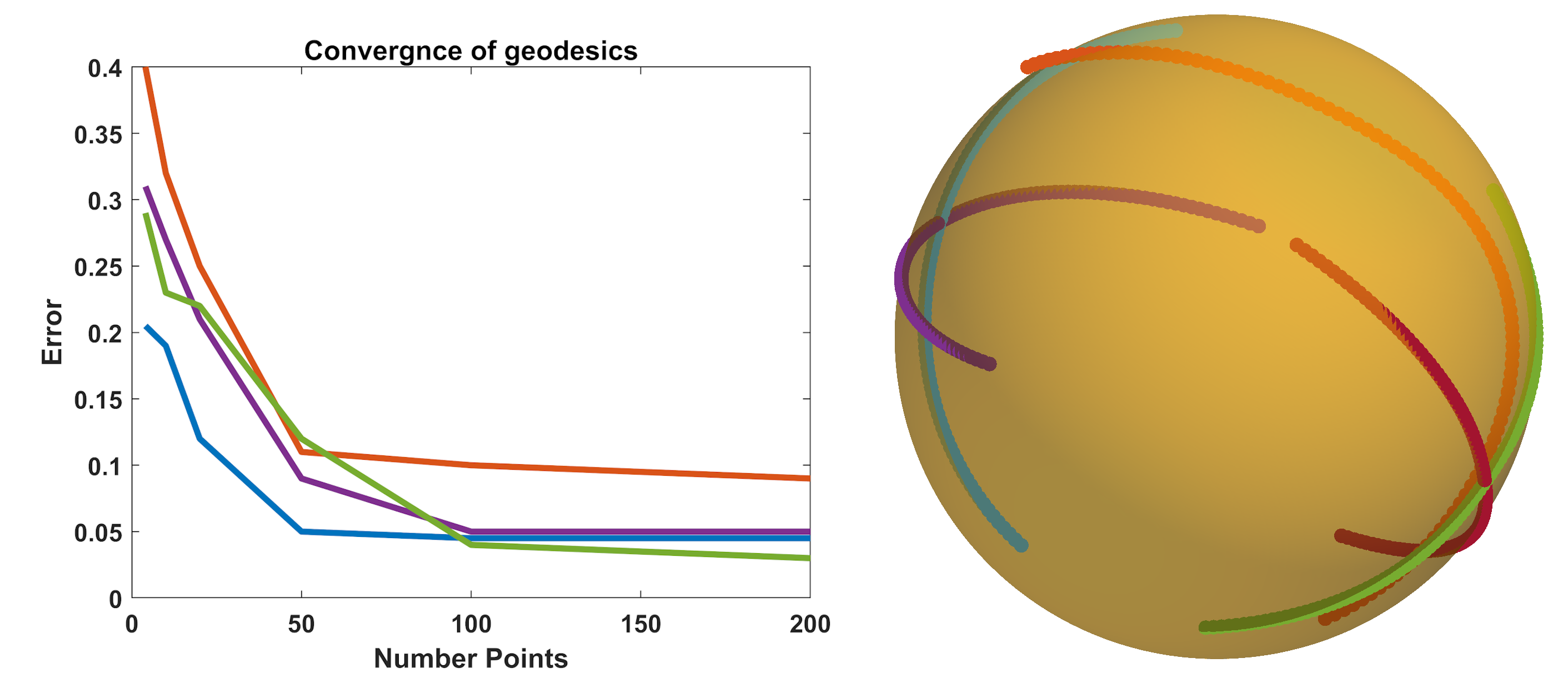}
  \vskip -0.1in
  \caption{Left: Geodesic approximation error v.s. number of points sampled in the latent space. Right: Geodesic curves generated from the chart decoders.}
  \label{fig:geo}
\end{figure}

\subsection{The MNIST and Fashion MNIST Manifolds}
In this subsection, we train a 4-chart CAE on MNIST and Fashion MNIST and explore the data manifold.

\paragraph{Decoder Outputs.}
Figure~\ref{fig:MNIST} illustrates several decoding results. Each column corresponds to one example. One finds that each chart decoder produces a legible digit, which may or may not coincide with the input. However, the maximum probability always points to the correct digit. Moreover, in some cases several chart decoders produce similar correct result (e.g., `7', `1', `9', and `6'), which indicates that the corresponding charts overlap in a region surrounding this digit.

\begin{figure}[ht]
  \centering
  \includegraphics[width=.6\linewidth]{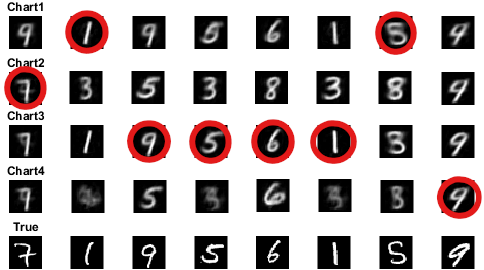}
  \caption{Decoder outputs for a few digit examples. The circled outputs receive the highest probability and serve as the final reconstruction.}
  \label{fig:MNIST}
\end{figure}

\paragraph{Visualization of the Manifolds.}
To understand the manifold structure globally, we visualize each chart in Figure~\ref{fig:FMNIST}. In these plots, we use t-SNE~\cite{maaten2008visualizing} to perform dimension reduction from the charted latent space to two dimensions. For MNIST, one sees that some digits are mostly covered by a single chart (e.g., purple 9) whereas others appear in multiple charts (e.g., navy blue 8). Similar observations are made for Fashion MNIST.

\begin{figure}[ht]
\begin{minipage}{.48\textwidth}
  \centering
  \includegraphics[width=.95\linewidth]{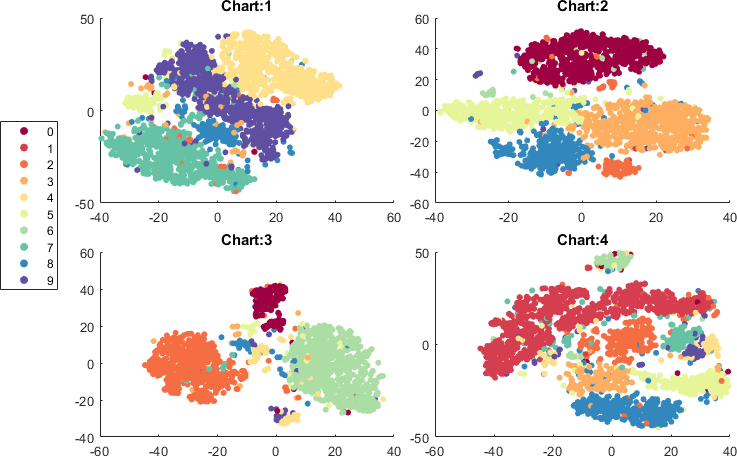}
\end{minipage}
\hspace{.2cm}
\begin{minipage}{.48\textwidth}
  \centering
  \includegraphics[width=.95\linewidth]{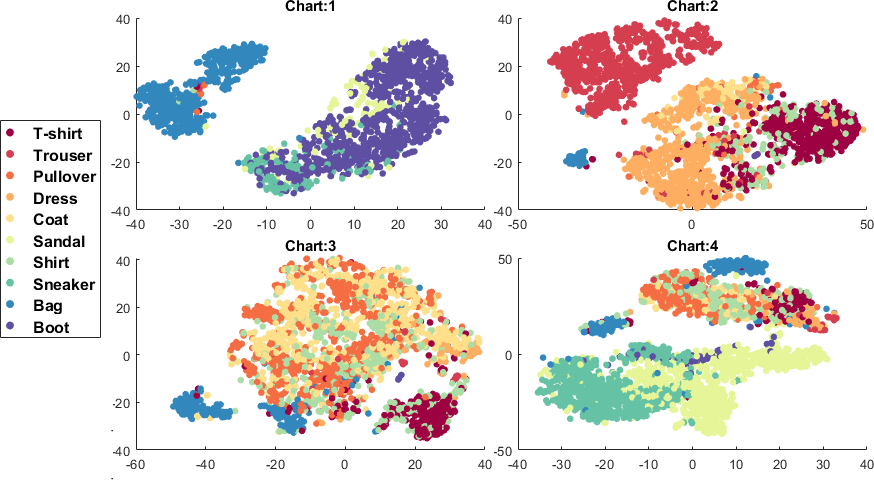}
\end{minipage}
\caption{Left: Visualization of the MNIST manifold by charts. Right: Visualization of the Fashion MNIST manifold by charts.}
  \label{fig:FMNIST}
\end{figure}

\subsection{Extensive Model Evaluation}\label{sec:eval}
To qualitatively evaluate the effectiveness of CAE, we conduct comparisons between VAE and CAE on three data sets: a sphere embedded in $\RR^{50}$, MNIST \cite{lecun2010mnist}, and Fashion MNIST \cite{DBLP:journals/corr/abs-1708-07747}. We define the following evaluation metrics to quantify the performance. 

\paragraph{Reconstruction Error}
Let $x$ be a data point in the test set $D_{test}$ and $y(x)$ be its reconstruction. Let there be $n$ test points. The reconstruction error is
\begin{equation}\label{eqn:recon}
\mathcal{E}_{recon}:= \frac{1}{n}\sum_{x \in D_{Test}} ||x-y||^2 .
\end{equation}

\paragraph{Unfaithfulness}
Let $\{z_i\}_{i=1}^\ell$ be a uniform sampling in the latent space and $\textbf{D}$ denote the decoder. Let $D_{test}$ be the training set. The unfaithfulness is
\begin{equation}\label{eqn:faithful}
\mathcal{E}_{unfaithful}= \frac{1}{\ell} \sum_{i=1}^\ell \min_{x \in D_{train} } \|x - \textbf{D} (z_i)\|^2 .
\end{equation}
We set $\ell=100$. The concept of unfaithfulness is complementary to the concept of novelty in deep generative models. Whereas novel samples are encouraged, this metric is concerned with how close the novel sample stays to the manifold. When the training set is sufficiently dense on the data manifold, newly generated data faraway from anything observed during training are unlikely to be realistic.

\paragraph{Coverage}
Let $\ell^*$ be the cardinality of the set
$
\{x^*~|~ x^* = \arg \min_{x \in D_{train} } \|x - \textbf{D} (z_i)\|^2 \}
$. 
Then, we define the coverage
\begin{equation}\label{eqn:coverage}
\mathcal{E}_{coverage} = \frac{\ell^*}{\ell}.
\end{equation}
A coverage score close to 1 indicates that the newly generated samples are well distributed on the manifold, whereas a score close to 0 indicates that the model may be experiencing mode collapse.

From Tables \ref{tab:MNIST} and \ref{tab:FMNIST}, one sees that at the same level of model complexity, CAE achieves the best results in all three metrics. Curiously, for VAEs and plain auto-encoders, the reconstruction error stays approximately the same regardless of the latent dimension; and only CAEs are able to reduce the reconstruction error through increasing the number of parameters.


\begin{table}[ht]
\caption{Extensive Comparison on MNIST.}
\label{tab:MNIST}
\resizebox{\textwidth}{!}{
\begin{tabular}{|c|c|c|c|c|c|c|}
\hline
\textbf{\begin{tabular}[c]{@{}c@{}}Model\\ \end{tabular}} & \textbf{\begin{tabular}[c]{@{}c@{}}\# of \\ Charts\end{tabular}}                                      & \textbf{\begin{tabular}[c]{@{}c@{}}Latent \\ dim\end{tabular}}                                   & \textbf{\begin{tabular}[c]{@{}c@{}}\# of \\  Param.\end{tabular}}                                                                                                                                 & \textbf{\begin{tabular}[c]{@{}c@{}}Recon.\\  Error\end{tabular}}                                                                                                                                                                                                                                                                & \textbf{Unfaithfulness}                                                                                                                                                                                                                                                                                             & \textbf{Coverage}                                                                                                                                                                                                                                                                     \\ \hline
\begin{tabular}[c]{@{}c@{}}Small\\ VAE\end{tabular}                     & \begin{tabular}[c]{@{}c@{}}1\\ 1\\ 1\\ 1\\ 1\end{tabular}                                             & \begin{tabular}[c]{@{}c@{}}4\\ 8\\ 16\\ 32\\ 64\end{tabular}                                        & \begin{tabular}[c]{@{}c@{}}893,028\\ 896,032\\ 902,040\\ 914,056\\ 938,088\end{tabular}                                                                                                           & \begin{tabular}[c]{@{}c@{}}0.0614 $\pm$ .002\\ 0.0607 $\pm$ .003\\ 0.0519 $\pm$ .001\\ 0.0564 $\pm$ .003\\ 0.0512 $\pm$ .002\end{tabular}                                                                                                                                                                                       & \begin{tabular}[c]{@{}c@{}}0.083 $\pm$ .021\\ 0.085 $\pm$ .016\\ 0.084 $\pm$ .021\\ 0.084 $\pm$ .023\\ 0.070 $\pm$ .011\end{tabular}                                                                                                                                                                              & \begin{tabular}[c]{@{}c@{}}0.83 $\pm$ .01\\ 0.81 $\pm$ .02\\ 0.87 $\pm$ .02\\ 0.91 $\pm$ .01\\ 0.94 $\pm$ .01\end{tabular}                                                                                                                                                            \\ \hline
\begin{tabular}[c]{@{}c@{}}Large\\ VAE\end{tabular}                     & \begin{tabular}[c]{@{}c@{}}1\\ 1\\ 1\\ 1\\ 1\end{tabular}                                             & \begin{tabular}[c]{@{}c@{}}4\\ 8\\ 16\\ 32\\ 64\end{tabular}                                        & \begin{tabular}[c]{@{}c@{}}2,535,028\\ 2,541,032\\ 2,553,040\\ 2,577,056\\ 2,625,088\end{tabular}                                                                                                 & \begin{tabular}[c]{@{}c@{}}0.0564 $\pm$ .001\\ 0.0525 $\pm$ .002\\ 0.0401 $\pm$ .001\\ 0.0414 $\pm$ .001\\ 0.0391 $\pm$ .002\end{tabular}                                                                                                                                                                                       & \begin{tabular}[c]{@{}c@{}}0.085 $\pm$ .008\\ 0.086 $\pm$ .011\\ 0.085 $\pm$ .016\\ 0.084 $\pm$ .017\\ 0.081 $\pm$ .011\end{tabular}                                                                                                                                                                              & \begin{tabular}[c]{@{}c@{}}0.91 $\pm$ .00\\ 0.95 $\pm$ .02\\ 0.94 $\pm$ .01\\ 0.93 $\pm$ .02\\ 0.91 $\pm$ .01\end{tabular}                                                                                                                                                            \\ \hline
\begin{tabular}[c]{@{}c@{}}CAE\end{tabular}                 & \begin{tabular}[c]{@{}c@{}}4\\ 4\\ 4\\ 8\\ 8\\ 8\\ 16\\ 16\\ 16\\ 32\\ 32\\ 32\\ 32\\ 32\end{tabular} & \begin{tabular}[c]{@{}c@{}}4\\ 8\\ 16\\ 4\\ 8\\ 16\\ 4\\ 8\\ 16\\ 4\\ 8\\ 16\\ 32\\ 64\end{tabular} & \begin{tabular}[c]{@{}c@{}}601,452\\ 612,700\\ 635,196\\ 635,196\\ 875,568\\ 917,328\\ 1,361,160\\ 1,401,304\\ 1,481,592\\ 2,374,104\\ 2,452,776\\ 2,610,120\\ 2,924,808\\ 3,554,184\end{tabular} & \begin{tabular}[c]{@{}c@{}}0.0516 $\pm$ .001\\ 0.0452 $\pm$ .002\\ 0.0409 $\pm$ .001 \\ 0.0414 $\pm$ .001\\ 0.0420 $\pm$ .002\\ 0.0374 $\pm$  .001\\ 0.0395 $\pm$ .002\\ 0.0389 $\pm$ .001\\ 0.0317 $\pm$ .001\\ 0.0286 $\pm$ .002\\ 0.0225 $\pm$ .002\\ 0.0290 $\pm$ .001\\ 0.0289 $\pm$ .001\\ 0.0282 $\pm$ .002\end{tabular} & \begin{tabular}[c]{@{}c@{}}0.069 $\pm$ .019\\ 0.068 $\pm$ .024\\ 0.065 $\pm$ .018 \\ 0.061 $\pm$ .017\\ 0.052 $\pm$ .011\\ 0.047 $\pm$  .015\\ 0.079 $\pm$ .013\\ 0.068 $\pm$ .017\\ 0.051 $\pm$ .018\\ 0.058 $\pm$ .016\\ 0.052 $\pm$ .015\\ 0.043 $\pm$ .012\\ 0.045 $\pm$ .011\\ 0.038 $\pm$ .017\end{tabular} & \begin{tabular}[c]{@{}c@{}}0.92 $\pm$ .01\\ 0.93 $\pm$ .02\\ 0.94 $\pm$ .01 \\ 0.94 $\pm$ .01\\ 0.98 $\pm$ .02\\ 0.92 $\pm$  .01\\ 0.93 $\pm$ .02\\ 0.94 $\pm$ .01\\ 0.94 $\pm$ .01\\ 0.98 $\pm$ .02\\ 0.98 $\pm$ .02\\ 0.98 $\pm$ .01\\ 0.98 $\pm$ .01\\ 0.98 $\pm$ .02\end{tabular} \\ \hline
\end{tabular}
}
\end{table}


\begin{table}[ht]
\caption{Extensive Comparison on FMNIST.}
\label{tab:FMNIST}
\resizebox{\textwidth}{!}{
\begin{tabular}{|c|c|c|c|c|c|c|}
\hline
\textbf{\begin{tabular}[c]{@{}c@{}}Model\\ \end{tabular}}     & \textbf{\# of charts}                                                                                & \textbf{\begin{tabular}[c]{@{}c@{}}Latent \\ dim\end{tabular}}                                  & \textbf{\begin{tabular}[c]{@{}c@{}}\# of \\  Param.\end{tabular}}                                                                                                                                 & \textbf{\begin{tabular}[c]{@{}c@{}}Recon.\\  Error\end{tabular}}                                                                                                                                                                                                                                                               & \textbf{Unfaithfulness}                                                                                                                                                                                                                                                                                          & \textbf{Coverage}                                                                                                                                                                                                                                                                 \\ \hline
\begin{tabular}[c]{@{}c@{}}Small\\ VAE \\ \end{tabular} & \begin{tabular}[c]{@{}c@{}}1\\ 1\\ 1\\ 1\\ 1\end{tabular}                                             & \begin{tabular}[c]{@{}c@{}}4\\ 8\\ 16\\ 32\\ 64\end{tabular}                                        & \begin{tabular}[c]{@{}c@{}}893,028\\ 896,032\\ 902,302\\ 914,056\\ 938,088\end{tabular}                                                                                                           & \begin{tabular}[c]{@{}c@{}}0.0575 $\pm$ .000\\ 0.0502 $\pm$ .001\\ 0.0477 $\pm$ .003\\ 0.0482 $\pm$ .001\\ 0.0468 $\pm$ .001\end{tabular}                                                                                                                                                                                      & \begin{tabular}[c]{@{}c@{}}0.016 $\pm$ .021\\ 0.015 $\pm$ .032\\ 0.017 $\pm$ .021\\ 0.019 $\pm$ .041\\ 0.029 $\pm$ .034\end{tabular}                                                                                                                                                                             & \begin{tabular}[c]{@{}c@{}}0.80 $\pm$ .01\\ 0.84 $\pm$ .01\\ 0.88 $\pm$ .01\\ 0.92 $\pm$ .02\\ 0.95 $\pm$ .01\end{tabular}                                                                                                                                                        \\ \hline
\begin{tabular}[c]{@{}c@{}}Large\\ VAE\end{tabular}                         & \begin{tabular}[c]{@{}c@{}}1\\ 1\\ 1\\ 1\\ 1\end{tabular}                                             & \begin{tabular}[c]{@{}c@{}}4\\ 8\\ 16\\ 32\\ 64\end{tabular}                                        & \begin{tabular}[c]{@{}c@{}}2,535,028\\ 2,541,032\\ 2,553,040\\ 2,577,056\\ 2,625,088\end{tabular}                                                                                                 & \begin{tabular}[c]{@{}c@{}}0.0474 $\pm$ .001\\ 0.0405 $\pm$ .000\\ 0.0389 $\pm$ .000\\ 0.0309 $\pm$ .000\\ 0.0291 $\pm$ .007\end{tabular}                                                                                                                                                                                      & \begin{tabular}[c]{@{}c@{}}0.014 $\pm$ .008\\ 0.013 $\pm$ .011\\ 0.015 $\pm$ .016\\ 0.016 $\pm$ .017\\ 0.021 $\pm$ .011\end{tabular}                                                                                                                                                                             & \begin{tabular}[c]{@{}c@{}}0.92 $\pm$ .00\\ 0.93 $\pm$ .02\\ 0.94 $\pm$ .01\\ 0.94 $\pm$ .02\\ 0.92 $\pm$ .01\end{tabular}                                                                                                                                                        \\ \hline
CAE                                                               & \begin{tabular}[c]{@{}c@{}}4\\ 4\\ 4\\ 8\\ 8\\ 8\\ 16\\ 16\\ 16\\ 32\\ 32\\ 32\\ 32\\ 32\end{tabular} & \begin{tabular}[c]{@{}c@{}}4\\ 8\\ 16\\ 4\\ 8\\ 16\\ 4\\ 8\\ 16\\ 4\\ 8\\ 16\\ 32\\ 64\end{tabular} & \begin{tabular}[c]{@{}c@{}}601,452\\ 612,700\\ 635,196\\ 635,196\\ 875,568\\ 917,328\\ 1,361,160\\ 1,401,304\\ 1,481,592\\ 2,374,104\\ 2,452,776\\ 2,610,120\\ 2,924,808\\ 3,554,184\end{tabular} & \begin{tabular}[c]{@{}c@{}}0.0409 $\pm$ .001\\ 0.0351 $\pm$ .002\\ 0.0301 $\pm$ .001 \\ 0.0314 $\pm$ .001\\ 0.0320 $\pm$ .002\\ 0.0274 $\pm$  .001\\ 0.0395 $\pm$ .002\\ 0.0389 $\pm$ .001\\ 0.0314 $\pm$ .001\\ 0.0281 $\pm$ .002\\ 0.0225 $\pm$ .002\\ 0.0190 $\pm$ .001\\ 0.0189 $\pm$ .001\\ 0.0177 $\pm$ .002\end{tabular} & \begin{tabular}[c]{@{}c@{}}0.010 $\pm$ .001\\ 0.015 $\pm$ .002\\ 0.010 $\pm$ .001 \\ 0.011 $\pm$ .001\\ 0.012 $\pm$ .002\\ 0.017 $\pm$  .001\\ 0.019 $\pm$ .002\\ 0.018 $\pm$ .001\\ 0.011 $\pm$ .001\\ 0.018 $\pm$ .002\\ 0.012 $\pm$ .002\\ 0.016 $\pm$ .001\\ 0.015 $\pm$ .001\\ 0.007 $\pm$ .002\end{tabular} & \begin{tabular}[c]{@{}c@{}}0.9 $\pm$ .01\\ 0.9 $\pm$ .01\\ 0.9 $\pm$ .01 \\ 0.94 $\pm$ .01\\ 0.93 $\pm$ .02\\ 0.97 $\pm$  .02\\ 0.98 $\pm$ .02\\ 0.96 $\pm$ .02\\ 0.97 $\pm$ .01\\ 0.94 $\pm$ .01\\ 0.96 $\pm$ .02\\ 0.97 $\pm$ .02\\ 0.97 $\pm$ .01\\ 0.97 $\pm$ .01\end{tabular} \\ \hline
\end{tabular}
}
\end{table}


\section{Network architectures}\label{app:Networks}
We section provides the details of the neural network architectures used in the numerical experiments. We denote by $FC_m$ a fully connected layer with $m$ output neurons; by $Conv_{i,j,k.l}$ a convolution layer with filters of size $(i,j)$, input dimension $k$, and output dimension $l$; by $d$ the dimension of the latent space, $m$ the dimension of the ambient space and $N$ the number of charts. See~\eqref{eqn:model.ae},\eqref{eqn:model.small.vae},\eqref{eqn:model.medium.vae},\eqref{eqn:model.large.vae},\eqref{eqn:model.cae1} \eqref{eqn:model.cae2}, and \eqref{eqn:model.cae3} for the architectures.

\begin{figure*}[h]
\textbf{Auto-Encoders}:
\begin{equation}\label{eqn:model.ae}
  \begin{split}
    \text{Encoder}&: x \rightarrow FC_{250} \rightarrow  FC_{250} \rightarrow  FC_{250} \rightarrow FC_{d} \rightarrow  z \\
    \text{Decoder}&: z  \rightarrow FC_{250} \rightarrow  FC_{250} \rightarrow  FC_{250} \rightarrow FC_{m} \rightarrow  y
  \end{split}
\end{equation}

\textbf{Cat Variational Auto-Encoder}:
\begin{equation}\label{eqn:model.small.vae}
  \begin{split}
    \text{Encoder}&: x \rightarrow FC_{50} \rightarrow  FC_{50} \rightarrow FC_{2d} \rightarrow  \mu, \sigma \\
    \text{Decoder}&: z \in \mathcal{N}(\mu,\sigma) \rightarrow FC_{50} \rightarrow  FC_{50} \rightarrow FC_{m} \rightarrow  y
  \end{split}
\end{equation}

\textbf{Small Variational Auto-Encoders}:
\begin{equation}\label{eqn:model.medium.vae}
  \begin{split}
    \text{Encoder}&: x \rightarrow FC_{100} \rightarrow  FC_{100} \rightarrow  FC_{100} \rightarrow FC_{2d} \rightarrow  \mu, \sigma \\
    \text{Decoder}&: z \in \mathcal{N}(\mu,\sigma) \rightarrow FC_{100} \rightarrow  FC_{100} \rightarrow  FC_{100} \rightarrow FC_{m} \rightarrow  y
  \end{split}
\end{equation}

\textbf{Large Variational Auto-Encoders}:
\begin{equation}\label{eqn:model.large.vae}
  \begin{split}
    \text{Encoder}&: x \rightarrow FC_{250} \rightarrow  FC_{250} \rightarrow  FC_{250} \rightarrow FC_{2d} \rightarrow  \mu, \sigma \\
    \text{Decoder}&: z \in \mathcal{N}(\mu,\sigma) \rightarrow FC_{250} \rightarrow  FC_{250} \rightarrow  FC_{250} \rightarrow FC_{m} \rightarrow  y
  \end{split}
\end{equation}

\textbf{Small CAE}
\begin{equation}\label{eqn:model.cae1}
  \begin{split}
    \text{Initial Encoder}&: x \rightarrow FC_{100} \rightarrow  FC_{100} \rightarrow  FC_{2d} \rightarrow z \\
    \text{Chart Encoders}  &: z \rightarrow FC_{100} \rightarrow FC_{100} \rightarrow FC_{d} \rightarrow z_{\alpha} \\
    \text{Chart Decoders}  &: z_{\alpha} \rightarrow  FC_{100} \rightarrow  FC_{100} \rightarrow FC_{2d} \rightarrow       \bar{z_{\alpha}} \\
     \text{Final Decoder} &: \bar{z_{\alpha}} \rightarrow FC_{100} \rightarrow FC_{100} \rightarrow FC_{m} \rightarrow y_{\alpha} \\
    \text{Chart Prediction}&: x \rightarrow FC_{100} \rightarrow FC_{100}  \rightarrow  FC_{N} \rightarrow softmax \rightarrow p
  \end{split}
\end{equation}

\textbf{Large CAE}
\begin{equation}\label{eqn:model.cae2}
  \begin{split}
    \text{Initial Encoder}&: x \rightarrow FC_{100} \rightarrow  FC_{100} \rightarrow FC_{100} \rightarrow  FC_{100} \rightarrow  FC_{2d} \rightarrow z \\
    \text {Chart Encoders} &:z \rightarrow FC_{100} \rightarrow FC_{100} \rightarrow  FC_{100}\rightarrow FC_{100} \rightarrow FC_{d} \rightarrow z_{\alpha} \\
    \text{Chart Decoders}&:z_{\alpha} \rightarrow  FC_{100} \rightarrow FC_{100} \rightarrow  FC_{100}\rightarrow  FC_{100} \rightarrow FC_{2d} \rightarrow  \bar{z_{\alpha}} \ \\
     \text{Final Decoder} &: \bar{z_{\alpha}} \rightarrow FC_{100}\rightarrow FC_{100} \rightarrow  FC_{100} \rightarrow FC_{100} \rightarrow FC_{m} \rightarrow y_{\alpha} \\
    \text{Chart Prediction}&: x \rightarrow FC_{100} \rightarrow FC_{100} \rightarrow FC_{100} \rightarrow  FC_{100} \rightarrow  FC_{N} \rightarrow softmax \rightarrow p
  \end{split}
\end{equation}

\textbf{Conv CAE}
\begin{equation}\label{eqn:model.cae3}
  \begin{split}
    \text{Initial Encoder}&: x \rightarrow FC_{150} \rightarrow  FC_{150} \rightarrow  FC_{625} \rightarrow z  \\
    \text {Chart Encoder} &:z\rightarrow Conv_{3,3,1,8} \rightarrow  Conv_{3,3,8,8} \rightarrow  Conv_{3,3,8,16} \rightarrow z \\
    \text{Chart Decoder}&:z_{\alpha}\rightarrow Conv_{3,3,16,8} \rightarrow  Conv_{3,3,8,8} \rightarrow  Conv_{3,3,8,1} \rightarrow FC_{m} \rightarrow  y_{\alpha} \\
    \text{Chart Prediction}&: z \rightarrow FC_{250} \rightarrow  FC_{10} \rightarrow softmax \rightarrow p
  \end{split}
\end{equation}

\end{figure*}

\end{document}